\newtheorem{hypo}{Claim}
\newcommand{\Hypothesis}{Claim}
\newcommand{\hypothesis}{claim}
\newcommand{\hypotheses}{claims}
\newtheorem{theorem}{Theorem}
\newcommand{\KL}{\mathrm{KL}}
\newcommand{\h}{h}
\newcommand{\hw}{h_{\w}}
\newcommand{\f}{\h}
\newcommand{\fw}{\hw}
\newcommand{\D}{\mathcal{D}}
\newcommand{\E}{\mathbb{E}}
\newcommand{\weak}{\mathcal{H}}
\newcommand{\strong}{\mathcal{H}_{\text{E}}}
\newcommand{\nweights}{d}
\newcommand{\x}{\vec{x}}
\newcommand{\w}{\vec{w}}
\renewcommand{\x}{x}
\renewcommand{\w}{w}
\newcommand{\wopt}{\w^*}
\newcommand{\ind}{\mathds{1}}
\newcommand{\fens}{\f_{\rho}}
\newcommand{\val}{\mathcal{D}}
\newcommand{\nem}{M}
\newcommand{\mv}{\mathrm{MV}_\rho}
\newcommand{\avg}{\mathrm{AVG}_\rho}
\newcommand{\mvunf}{\mathrm{MV}_u}
\newcommand{\avgunf}{\mathrm{AVG}_u}
\newcommand{\mylambda}{\rho}
\newcommand{\posterior}{p(w|\mathcal{D})}
 \newcommand{\ab}[1]{\emph{#1}}
\title{On Uniform, Bayesian, and PAC-Bayesian Deep Ensembles}
\author{%
  \name Nick Hauptvogel \email nick@hauptvogel.cc\\
  \addr
  Department of Computer Science,
  University of Copenhagen, Denmark \AND \name Christian Igel \email igel@di.ku.dk\\
  \addr
  Department of Computer Science,
  University of Copenhagen, Denmark
}
\begin{document}

\maketitle

\begin{abstract}
It is common practice to combine deep neural networks into ensembles.
These deep ensembles can benefit from the cancellation of errors effect: Errors by ensemble members may average out, leading to better generalization performance than each individual network. 
Bayesian neural networks learn a posterior distribution over model parameters, and sampling and weighting networks according to this posterior yields an ensemble model referred to as a Bayes ensemble. 
In this study, we review the argument that neither the sampling nor the weighting in Bayes ensembles are particularly well suited for increasing generalization performance, as they do not support the cancellation of errors effect, which is evident in the limit from the Bernstein-von~Mises theorem for misspecified models.
In contrast, we show that a weighted average of models, where the weights are optimized by minimizing a second-order PAC-Bayesian generalization bound, can improve generalization performance. It is crucial that the optimization takes correlations between models into account. This can be achieved by minimizing the tandem loss, which requires hold-out data for estimating error correlations.
The tandem loss based PAC-Bayesian weighting increases robustness against correlated models and models with lower performance in an ensemble. This allows us to safely add several models from the same learning process to an ensemble, instead of using early-stopping for selecting a single weight configuration.
Our study presents empirical results supporting these conceptual considerations on four different classification datasets. Our experiments provide further evidence that state-of-the-art intricate Bayes ensembles from the literature do not outperform simple uniformly weighted deep ensembles in terms of classification accuracy. Additionally, we show that these Bayes ensembles cannot match the performance of deep ensembles weighted by optimizing the tandem loss, which additionally provides nonvacuous rigorous generalization guarantees.
\end{abstract}

\section{Introduction}
Combining different deep neural networks into an ensemble model is a common way to increase generalization performance \citep{bishop2023deep,goodfellow2016deep}.
Such deep ensembles can profit from the cancellation of errors effect \citep{eckhardt:85}:
When the individual networks perform better than random guessing and make independent errors, the errors in an additive ensemble (or committee) average out, and the ensemble tends to outperform the individual models \citep{hansen1990,perrone93,dietterich2000ensemble}. 
Many strategies have been explored to create networks for an ensemble \citep[e.g.][]{buschjäger2020generalized, dangelo2021repulsive, dziugaite2017computing, garipov2018loss, huang2017snapshot, jiang2017decomposition, lee2015m, liu1999negativecorrelation, masegosa2020second, monteith2011turning, ortega2022diversity, perez2021tighter}. However, simple deep ensembles, which rely solely on repeating neural network training with random weight initializations and stochastic training to generate ensemble members, often already improve generalization over single networks \citep{lakshminarayanan2017simple}.

One way to build a neural network ensemble is based on the predictive posterior of Bayesian neural networks \citep{adlam2020cold, aitchison2021statistical,  chen2014sghmc, farquhar2020radial,  gal2016dropout, grunwald2012safe, izmailov2021bayesian, kapoor2022, kendall2017uncertainties, mackay1992practical, maddox2019simple, monteith2011turning, nabarro2021data, neal1992hmc, neal1995bayesian, neal2011mcmc, pearce2020uncertainty, ritter2018scalable, welling2011bayesian, wenzel_2020_how, wenzel2020hyperparameter, wilson2020bayesian, zhang2019cyclical}, and this approach has recently gained attention \citep{bachmann2022tempering,fortuin2022bayesian,rudner2022tractable, wiese2023efficient}.  A Bayesian neural network learns a posterior distribution over the weights of the network \citep{mackay1992bayesian,neal1995bayesian}. Averaging networks sampled from this posterior yields an approximation of the Bayesian model average (BMA) and is referred to as a Bayes ensemble. 
This approach comes with great expectations, for example, \citet{domingos2000bayesian}
stated `Given the “correct” model space and prior distribution, Bayesian model averaging is the optimal method for making predictions; in other words, no other approach can consistently achieve lower error rates than it does.'
While Bayes ensembles typically outperform individual ensemble members, 
\emph{we argue that neither the sampling nor the weighting in a Bayes ensemble are particularly well-suited for increasing generalization performance in theory and practice compared to other ensembling methods} (see also \citealp{minka2000bayesian,ortega2022diversity}).
The Bernstein-von~Mises theorem (assuming identifiability of the model) shows that the BMA converges towards the maximum likelihood point estimate with growing dataset size \citep{kleijn2012bernstein}. In the limit, it will eventually concentrate on a single model without exploiting ensemble diversity. Thus, the Bayes ensemble typically does not leverage the cancellation of errors effect. This is particularly problematic because the model space considered in the BMA setting is typically not `correct': The Bayes ensemble itself is, in general, outside the model space from which its members are sampled -- and in the space of ensemble models the Bayes ensemble cannot be expected to be optimal. 

PAC-Bayesian methods \citep{mcallester1998pacbayesian} provide an alternative approach to incorporate prior information about models and to improve the weighting of the members of the ensemble.
We bring forward optimizing the weighting of deep ensemble members using a PAC-generalization bound based on the tandem loss, which accounts for pairwise correlations between networks \citep{masegosa2020second}. This allows us to maintain diversity and increase the generalization performance of the deep ensemble.
This weight optimization is especially useful when intermediate weight configurations (snapshots) from model training are considered, in which case the optimization performs model selection taking both individual network performance and ensemble diversity into account. The price to pay for the improved weighting is additional hold-out data for the optimization of the weights; however, the resulting PAC-Bayesian bound provides a rigorous performance guarantee.

Our study transfers existing, but scattered and often partly neglected knowledge on
the (1) behavior of Bayesian model averaging (BMA), in particular its suboptimality in terms of generalization performance in the (typical) misspecified setting, (2) cancellation of errors, (3) and second-order PAC-Bayesian bounds to ensembling of deep neural networks (DNNs). 
The main contributions can be summarized as:
\begin{itemize}
    \item We discuss the conceptual differences between simple, Bayes and PAC-Bayesian deep ensembles, in particular reviewing the argument that Bayes ensembles do not consider the cancellation of errors effect, which may explain some observations in the current literature (e.g., on ``cold posteriors'').
    \item We support our conceptual arguments through unbiased experiments on DNN benchmark problems that were used to promote BMA. Our results on four datasets provide further evidence that complex Bayesian approximate inference methods can often be surpassed by more efficient simple deep ensembles.
    Bagging decreased predictive performance, showing the trade-off between the degree of randomization and single-network performance given limited training data.
    \item We show that the optimization of a PAC-Bayesian generalization bound using the tandem loss can improve the predictive performance of deep ensembles and provides nonvacuous generalization guarantees. This is in contrast to what could have been expected from the results by \cite{ortega2022diversity}. Our results stress the importance of \emph{second-order} PAC-Bayesian analysis for ensembles.
    \item We demonstrate that when intermediate checkpoints from the same training run are used in a neural network ensemble, optimizing their weighting by minimizing a second-order PAC-Bayesian bound can improve accuracy and prevent checkpoints from low-performing models from deteriorating the ensemble generalization performance. 
\end{itemize}

In the next section, we will briefly summarize the background on Bayesian model averaging, the cancellation of errors effect, deep ensembles, and PAC-Bayesian majority voting.
Section \ref{sec:main} will bring these topics together: It contrasts uniform, Bayes, and PAC-Bayesian deep ensembles and formulates our main \hypotheses. These \hypotheses{} will be empirically studied in Section~\ref{sec:experiments} before we conclude.

\section{Background}
We consider data $\mathcal{D} = \{(x_i, y_i)\}_{i=1}^n$ drawn i.i.d.{} from a distribution $p$.
The loss of a hypothesis/model $h$ on $(x, y)$ is denoted by $\ell(h(x), y)$. The generalization error (risk) of $h$ is given by $L(h) = \mathbb{E}_{(x, y)\sim p}[\ell(h(x), y)]$ and the empirical risk by $\hat{L}_{\mathcal{D}}(h)= \frac{1}{n} \sum_{i=1}^n \ell(h(x_i), y_i)$. 
We write $\fw$ to stress that a model is parameterized by $\w \in \mathbb{R}^d$.
We consider ensembles of $\nem$ models $\f_i=\f_{\w_i}\in\weak$, $i=1,\dots,\nem$, weighted by $\mylambda\in\mathbb{R}^\nem$.
For regression, the ensemble regressor is $\fens(x|\f_1,\dots,\f_M)=\sum_{i=1}^\nem \mylambda_i \f_i(x)$.
For classification, we distinguish two ways of combining the predictions of the ensemble members.
We assume a predictive distribution $p_i(y|x)=p(y|x,w_i)$ associated with each hypothesis $\f_{\w_i}(x)$.
The ensemble prediction $\fens(x|\f_1,\dots,\f_M)$ can be either defined by the $\mylambda$-weighted average of these distributions
\begin{equation}
    \avg(x)=\arg\max_{y}\sum_{i=1}^\nem \mylambda_i p_i(y|x)
\end{equation}or by the 
$\mylambda$-weighted majority vote 
\begin{equation}
\mv(x)=\arg\max_{y}\sum_{i=1}^\nem \mylambda_i \ind[y=\arg\max_{y'} p_i(y'|x)]\enspace,
\end{equation}
where $\ind[\cdot]$ is 1 if its argument is true and zero otherwise. Uniform weighted aggregations are denoted by $\avgunf$ and $\mvunf$.

\subsection{Bayesian model average}
In Bayesian inference, we update our prior belief about the parameters $\w$ of a model that predicts $y$ given $x$ based on $p(y|x,w)$ to a posterior distribution $\posterior$ based on training data $\D$.
Marginalizing $p(y|x,w)$ over this posterior gives the posterior predictive distribution
\begin{align}\label{eq:bma}
    p(y|x, \mathcal{D}) &= \int p(y|x, w) p(w | \mathcal{D}) \mathrm{d}w,
\end{align}
also known as Bayesian Model Average (BMA), Bayes ensemble \citep[e.g.,][]{wenzel_2020_how}, Bayesian marginalization \citep[e.g.,][]{wilson2020bayesian}, and posterior predictive. It predicts by averaging all possible models weighted by their posterior probability. The BMA explicitly models aleatoric uncertainty (given by $p(y|x, w)$) and epistemic uncertainty in the form of $\posterior$ (see \citealp{caprio2024credalbayesiandeeplearning} for a recent discussion).
As the integral in \eqref{eq:bma} is in general intractable, it is approximated in practice by an average of $M$ models sampled from the posterior:
\begin{align}
    p(y|x, \mathcal{D}) &\approx \frac{1}{M} \sum_{m=1}^M p(y|x, w_m), \quad  w_m\sim p(w|\mathcal{D}) \label{eq:bma_sum}
\end{align}
It is in general difficult to sample from $p(w|\mathcal{D})$, 
which can, for example, be addressed by Markov chain Monte Carlo (MCMC) sampling \citep{chen2014sghmc, neal1992hmc, neal2011mcmc, welling2011bayesian, wenzel_2020_how, zhang2019cyclical}, Monte Carlo Dropout \citep{folgoc2021mc, gal2016dropout}, 
 and Laplace approximation \citep{mackay1992practical, ritter2018scalable}.

\subsection{Cancellation of errors}\label{sec:cancel}
\newcommand{\emax}{\ensuremath{p_{\max}}}
\newcommand{\pr}{P}
The cancellation of errors effect refers to the fact that,  
when individual ensemble members perform better than random guessing and make independent errors, their errors average out and the combined model
outperforms the individual predictors
\citep{eckhardt:85,hansen1990}. 
Let us assume binary classification and assume that each member of the ensemble has an error probability lower bounded by $\emax<\frac{1}{2}$. Given $\nem$ models, the generalization error   of the ensemble is upper bounded by
\begin{equation}
\pr_{(x,y)\sim\ p}(\mvunf(x) \neq y) 
\le
    \exp\left(-{2\big(\frac{\nem+1}{2}-{\nem}\emax\big)^2}{\nem^{-1}}\right)\enspace,
\end{equation} see Appendix~\ref{sec:hoeffding} for a proof.
That is, in the idealized (and not realistic) setting of independent errors, the generalization error vanishes with increasing $\nem$ if the classifiers are better than random guessing. 
The corresponding regression case relating the expected mean squared error and the correlation of the models' outputs is discussed in deep learning textbooks by \cite[][sec.~7.11]{goodfellow2016deep} and \cite[][sec.~9.6]{bishop2023deep}.

\subsection{Deep ensembles}

Deep ensembles \citep{lakshminarayanan2017simple} are ensembles with deep neural networks as members. More specifically, we refer to a deep ensemble if the networks all share the same structure.
These networks are typically the result of independent training processes.
The diversity is a result of random initialization and stochastic optimization. This randomness alone can yield well-performing ensembles, while additional randomization of the training data by bootstrap aggregation (bagging) bears the risk of the ensemble being worse than a single network trained on all data \citep{lakshminarayanan2017simple,lee2015m}. 
Other ways to create deep ensemble members include the use of a cyclical learning rate schedule and taking the intermediate checkpoints as snapshot ensembles (SSEs) \citep{huang2017snapshot} as well as searching for other ensemble members in the neighborhood of a single pre-trained network (fast geometric ensembling \citep{garipov2018loss}).
Although being introduced as an alternative to Bayesian approaches,
 Bayesian interpretations of deep ensembles can be found in recent work \citep{dangelo2021repulsive, wilson2020bayesian}.

\subsection{PAC-Bayesian majority voting}
PAC-Bayesian analysis \citep{mcallester1998pacbayesian, valiant1984pac} provides bounds on the generalization error of Gibbs classifiers defined by a distribution $\rho$ over a (subset of a) hypothesis space $\weak$ given empirical risks for the hypotheses.
 
A Gibbs classifier draws a hypothesis $h$ according to $\rho$ at random for each input $\x$ and returns the prediction $h(\x)$. 
PAC-Bayesian bounds hold for all distributions $\rho$ over $\weak$ simultaneously. This allows us to directly optimize a PAC-Bayesian bound in terms of $\rho$, e.g., by gradient-based methods \citep{dziugaite2017computing, masegosa2020second, masegosa2020learning, ortega2022diversity, perez2021tighter, thiemann2017strongly}.
In a weighted majority voting classifier $\mv$ (e.g., a deep ensemble), the randomized prediction is replaced by a $\rho$-weighted vote by all hypotheses \citep{germain2009pac, germain2015riskboundsmajvote}. PAC-Bayesian bounds can be applied to $\mathrm{MV}_\rho$ by bounding $L(\mathrm{MV}_\rho)$ by twice the risk of the corresponding Gibbs classifier \citep{germain2015riskboundsmajvote, langford2002pac, masegosa2020second}. Gibbs classifiers ignore interactions between the models (such as cancellation of errors), and optimization of the weighting using bounds on the Gibbs classifier directly does typically not increase generalization performance of the ensemble \citep[e.g.,][]{lorenzen2019pac}. This can be addressed by second-order PAC-Bayesian bounds  \citep{germain2015riskboundsmajvote,lacasse2006pac,masegosa2020second}. In particular,  \citeauthor{masegosa2020second}
derive a bound in terms of the tandem loss 
\begin{equation}
    L(h, h') = \mathbb{E}_{p(x,y)}[\mathds{1}(h(x) \neq y \land h'(x) \neq y)]\enspace,
\end{equation} which takes pairwise correlations into account:
\begin{theorem}[\citealp{masegosa2020second}]\label{thm:tandem}
\textit{For any probability distribution $\pi$ on $\weak$ that is independent of $\D$ and any $\delta\in ]0,1[$,
with probability at least $1-\delta$ over a random draw of $\D$ with $n$ elements, for all distributions $\rho$ on $\weak$ and all $\lambda \in ]0, 2[$
simultaneously:}
\begin{equation}\label{eq:tandem}
L(\mathrm{MV}_\rho) \le 4 \left(
\frac{\E_{\rho^2} [ \hat{L}_{\D}(h, h')]}{1-\lambda/2}+
\frac{2 \KL(\rho\|\pi) + \ln(2\sqrt{n}/\delta)}{\lambda(1 -\lambda/2)n}
\right)
\end{equation}
\end{theorem}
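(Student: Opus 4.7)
The plan is to decompose the argument into two essentially independent pieces: (i) a deterministic second-moment reduction from the majority-vote risk $L(\mv)$ to the expected pairwise tandem loss $\E_{\rho^2}[L(h,h')]$, which will provide the factor $4$ in front of the stated bound, and (ii) a standard PAC-Bayesian Bernstein-type inequality applied on the \emph{product} hypothesis space $\weak\times\weak$, which will provide the fraction containing $\KL$ and $\lambda$. The coefficient $2\KL(\rho\|\pi)$ will then fall out of the product-measure chain rule for KL divergence.

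\textbf{Step (i): second-moment reduction.} I would start from the observation that whenever $\mv(x)\neq y$, strictly more than half of the $\rho$-mass of hypotheses must err on $(x,y)$. Writing $Z(x,y) := \E_{h\sim\rho}[\ind(h(x)\neq y)]$, this means $\ind[\mv(x)\neq y] \le \ind[Z\ge 1/2] \le 4Z^2$, since $4Z^2\ge 1$ whenever $Z\ge 1/2$ and the right-hand side is nonnegative otherwise. Expanding the square as $Z(x,y)^2 = \E_{(h,h')\sim\rho^2}[\ind(h(x)\neq y)\,\ind(h'(x)\neq y)]$, using Fubini to swap the expectations, and taking expectation over $(x,y)\sim p$, yields the deterministic inequality $L(\mv) \le 4\,\E_{\rho^2}[L(h,h')]$.

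\textbf{Step (ii): PAC-Bayes on pairs.} I would treat the pair $(h,h')$ as a single hypothesis in $\weak\times\weak$ and the $[0,1]$-valued tandem loss as a bounded loss on the same i.i.d.\ sample $\D$. Applying a Thiemann-style PAC-Bayes--Bernstein bound with posterior $\rho\otimes\rho$ and prior $\pi\otimes\pi$ will yield, with probability at least $1-\delta$ over $\D$ and simultaneously for all $\rho$ and all $\lambda\in\,]0,2[$,
\begin{equation*}
\E_{\rho^2}[L(h,h')] \;\le\; \frac{\E_{\rho^2}[\hat{L}_\D(h,h')]}{1-\lambda/2} + \frac{\KL(\rho\otimes\rho\,\|\,\pi\otimes\pi) + \ln(2\sqrt{n}/\delta)}{\lambda(1-\lambda/2)\,n}.
\end{equation*}
The chain rule for KL on product measures gives $\KL(\rho\otimes\rho\,\|\,\pi\otimes\pi) = 2\KL(\rho\|\pi)$. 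Combining with step (i) by multiplying through by $4$ produces exactly the claimed bound.

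\textbf{Main obstacle.} Neither piece is individually deep, but there is genuine bookkeeping to get right. The delicate point is step (ii): the PAC-Bayes inequality is applied on the product hypothesis space while the $n$ data draws remain the same, so I must check that the moment-generating-function argument underlying the Thiemann-style bound still goes through when the bounded random variable is the pairwise indicator rather than the single-hypothesis indicator---which it does, since only boundedness in $[0,1]$ is used. The other subtle point is the simultaneous quantification over $\lambda\in\,]0,2[$: this is not obtained by a union bound over a grid (which would introduce an extra $\log\log$ term) but by observing that $\lambda$ parameterises a purely deterministic post-processing of a single high-probability event, so the bound holds for all $\lambda$ under the same event.
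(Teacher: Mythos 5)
The paper does not prove this theorem itself; it is quoted from \cite{masegosa2020second}, and your reconstruction follows that reference's proof essentially verbatim: the pointwise second-order Markov step $\ind[\mv(x)\neq y]\le\ind[Z\ge 1/2]\le 4Z^2$ with $\E[Z^2]=\E_{\rho^2}[L(h,h')]$, followed by the PAC-Bayes-$\lambda$ bound of \cite{thiemann2017strongly} applied on $\weak\times\weak$ with posterior $\rho\otimes\rho$ and prior $\pi\otimes\pi$, whence $\KL(\rho\otimes\rho\|\pi\otimes\pi)=2\KL(\rho\|\pi)$. Your handling of the two delicate points is also right: boundedness in $[0,1]$ is all the concentration step needs, and simultaneity over $\lambda$ comes from the fact that the $\lambda$-bound is a deterministic relaxation of the single high-probability PAC-Bayes-kl event (only the claim that ``strictly more than half'' of the mass errs should read ``at least half'', which is what the subsequent indicator actually uses).
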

Here $\mathrm{KL}(\rho\|\pi)$ denotes the Kullback–Leibler divergence between prior $\pi$ and posterior $\rho$,  $\hat{L}_{\D}(h,h')$ is the empirical tandem loss
on the data 
$\hat{L}_{\D}(h,h')= \frac{1}{|\D|}\sum_{(x,y)\in\D} \mathds{1}(h(x) \neq y \land h'(x) \neq y)]$, and  $\E_{\rho^2}$ is the short form for 
$\E_{h\sim\rho,h'\sim\rho}$.
The bound can be efficiently optimized w.r.t.{} to $\rho$ and $\lambda$ \citep{thiemann2017strongly}.

We consider optimization of $\rho$ (i.e., the weights in a weighted ensemble) given $\nem$ models as proposed by \citet{masegosa2020second}. Instead, one could also turn a second-order PAC-Bayesian bound into an objective function to optimize $\rho$ and the parameters of the $\nem$ models simultaneously.
This has been done by \citet{masegosa2020learning} for small multi-layer perceptrons and by \citet{ortega2022diversity} to optimize deep ensembles. However, because of the optimization of the neural networks' parameters and their weighting, the formal guarantees by the PAC-Bayesian bounds get lost. Furthermore, \citeauthor{ortega2022diversity} did not find improvements compared to uniformly weighted deep ensembles for larger networks (e.g., ResNet20, \citealp{he2016deep}) in their empirical evaluation. 

\section{Uniform, Bayesian, and PAC-Bayesian deep ensembles}\label{sec:main}
In the following, we revisit results from the literature \citep{minka2000bayesian,masegosa2020learning} and argue that 
general ensemble methods and BMA have different motivations, which can leads to difference in prediction performance in practice.
Bayes ensembles are a way to improve uncertainty quantification and calibration of DNNs and often increase accuracy compared to a single neural network. 
However, the BMA does not take interactions between the ensemble members into account and weights them according to $p(w|\mathcal{D})$, which does not necessarily align with maximizing the predictive performance of an ensemble.

\subsection{Bayes ensembles and cancellation of errors} 
In a weighted additive ensemble, a finite set of 
 hypotheses from $\weak$ are combined to a new model $\fens\in\strong$
 from the hypothesis class 
\begin{equation}
    \strong = \left\{ \fens(x|\f_1,\dots,\f_M)\,\bigg|\, \nem\in{\mathbb{N}, \mylambda\in\mathbb{R}^\nem, \f_i\in\weak }  \right\}.
\end{equation}
Learning a weighted combination of hypotheses can be regarded as a form of stacked generalization \citep{wolpert_stacked_1992}. As discussed in Section~\ref{sec:cancel}, $\fens$ can generalize better than the individual $\f_1,\dots,\f_M$ if the ensemble members are diverse in the sense that their errors are not strongly correlated.
Typically, $\weak\subset\strong$, and for most ensemble methods this is key. 
The ensemble members selected from $\weak$ are often referred to as weak learners and are not expected to have a very low risk (generalization performance)
as long as they are better than random guessing: The desired model is expected to be in $\strong\setminus\weak$. Boosting with decision stumps is a prime example \citep{freund1996boosting}.

\newcommand{\true}{h_p}
In Bayesian machine learning, the posterior distribution $p(\f|\D)$ over $\weak$ models the uncertainty of identifying a desired model $\f^*\in\weak$ given finite training data $\D$ and the prior over $\weak$.\footnote{The desired model 
in $\weak$ 
is the model in $\weak$ 
closest to the true model $\true$ underlying the 
data  generating distribution $p$.
We do neither assume $\true\in\weak$ nor $\true\in\strong$.} In Bayesian deep learning with a fixed neural network architecture $\fw$ parameterized by weights $\w\in\mathbb{R}^\nweights$, we have $\weak = \{\fw| \w\in\mathbb{R}^\nweights\}$ and the posterior over the hypotheses is modelled by a distribution $p(\w\,|\,\D)$ over the weights $\w$.
To simplify the discussion, let us assume that the models are parameterized in a way that ensures identifiability, that is, different parameters generate different hypotheses. 
With increasing training data set size $n=|\D|$ the uncertainty about the desired model decreases, and 
$p(\w\,|\,\D)$ should concentrate on $\wopt$ (i.e., the weights of the desired model $\f^*=\f_{\wopt}$).
This is formally shown by the Bernstein-von Mises theorem \citep{kleijn2012bernstein, van2000asymptotic}, which was first discovered by \citet{laplace1820theorie}, taken up by \citet{bernstein1917} and \citet{vonmises1931} independently, 
and proven in a more general form by \citet{lecam1953bernsteinvonmisesproof}, who coined the name. 
\begin{theorem}[Bernstein-von Mises theorem]
Let the parametric model $\f_{\w}$ be twice differentiable with a non-singular Fisher information matrix $\mathcal{I}(\wopt)$ at $\wopt$, identifiable and continuous on a compact parameter space.
Furthermore, let the prior measure be absolutely continuous in a neighbourhood of $\wopt$ with a continuous density non-zero at $\wopt$. 
Let us consider the sequence of posterior distributions $p_n (\w\,|\,\mathcal{D}_n)$ 
and maximum likelihood estimates
$\hat{\w}_n$ given $\D_n$ for data sets $\D_{n}\subset \D_{n+1}$. 
The Bernstein-von Mises theorem implies
    \begin{align}
        \lVert p_n(\w | \D_n) - \mathcal{N}(\hat{\w}_n, {n}^{-1} \mathcal{I}^{-1}(\wopt)) \rVert_{\text{TV}} \overset{p_{w^*}}{\rightarrow} 0.
    \end{align}
\end{theorem}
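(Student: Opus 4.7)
The plan is to follow the classical four-step route to the Bernstein-von~Mises theorem: (i) establish joint consistency of the MLE $\hat{\w}_n$ and of the posterior at $\wopt$, (ii) perform a local asymptotic normality (LAN) expansion of the log-likelihood around $\hat{\w}_n$, (iii) change variables to $t=\sqrt{n}(\w-\hat{\w}_n)$ to read off the Gaussian limit, and (iv) upgrade pointwise convergence of the rescaled densities to total-variation convergence via Scheffé's lemma, after controlling tail mass outside a shrinking neighborhood.

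First I would handle consistency. Under identifiability, continuity of $\f_\w$, and compactness of the parameter space, Wald's argument gives $\hat{\w}_n \overset{p_{\wopt}}{\to} \wopt$. For posterior consistency, continuity and positivity of the prior density at $\wopt$ place positive prior mass on every Kullback-Leibler neighborhood of $\wopt$; identifiability makes $\wopt$ a well-separated maximizer of the expected log-likelihood; Schwartz's theorem then yields $p_n(\|\w-\wopt\|\ge\epsilon\mid\D_n)\to 0$ in $p_{\wopt}$-probability for every $\epsilon>0$. Second, I would Taylor-expand the log-likelihood $\ell_n(\w)=\tfrac{1}{n}\log p(\D_n\mid\w)$ around $\hat{\w}_n$. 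The first-order term vanishes by the MLE equation, leaving
\begin{equation}
\log p(\D_n\mid\w) = \log p(\D_n\mid\hat{\w}_n) - \tfrac{n}{2}(\w-\hat{\w}_n)^\top J_n (\w-\hat{\w}_n) + R_n(\w),
\end{equation}
with $J_n = -\nabla^2 \ell_n(\hat{\w}_n) \to \mathcal{I}(\wopt)$ in probability by twice differentiability and the law of large numbers, and $R_n$ uniformly $o(1)$ on any shrinking neighborhood of $\wopt$.

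Third, after the change of variables $t=\sqrt{n}(\w-\hat{\w}_n)$, the rescaled posterior density is proportional to $p(\hat{\w}_n + t/\sqrt{n})\exp\!\bigl(-\tfrac{1}{2}t^\top \mathcal{I}(\wopt)t + o(1)\bigr)$ on the ball $\{\|t\|\le M_n\}$ for any $M_n\to\infty$ slowly. Continuity of the prior density at $\wopt$ and $\hat{\w}_n\to\wopt$ give pointwise convergence of this density to the unnormalized $\mathcal{N}(0,\mathcal{I}^{-1}(\wopt))$ density; equivalently, in the original parameterization the posterior density converges pointwise to that of $\mathcal{N}(\hat{\w}_n, n^{-1}\mathcal{I}^{-1}(\wopt))$.

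The main obstacle is step (iv): promoting pointwise/local convergence to total-variation convergence. This requires bounding two pieces separately. On $\{\|t\|\le M_n\}$, pointwise convergence together with a dominating integrable Gaussian envelope (available once $J_n$ is close to $\mathcal{I}(\wopt)$) yields $L^1$ convergence of the rescaled densities by Scheffé's lemma, and the normalizing constants converge in probability to $(2\pi)^{d/2}|\mathcal{I}(\wopt)|^{-1/2}$. On the complement $\{\|t\|>M_n\}$, one must show that both the true posterior and the Gaussian approximation assign vanishing mass; for the Gaussian this is standard tail control, while for the posterior it follows by combining the step-(i) consistency with the fact that on the compact parameter space identifiability gives $\ell_n(\w)-\ell_n(\wopt)\le -c(\epsilon)<0$ uniformly for $\|\w-\wopt\|\ge\epsilon$ with high probability. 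Getting these two tail estimates to match in a uniform sense is the delicate part; once done, the triangle inequality in total variation closes the argument.
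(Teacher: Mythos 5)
The paper does not prove this statement: it is quoted as a classical result and attributed to the literature (\citealp{kleijn2012bernstein,van2000asymptotic,lecam1953bernsteinvonmisesproof}), so there is no in-paper proof to compare against. Your proposal should therefore be judged as a standalone reconstruction of the classical argument, and as such it follows the right general route -- consistency, local expansion of the log-likelihood, rescaling by $\sqrt{n}$, and an $L^1$/total-variation upgrade.

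The genuine gap is in steps (ii) and (iv), and it is exactly the point you flag as ``delicate'' without resolving. Under the stated hypotheses (twice differentiability, non-singular $\mathcal{I}(\wopt)$, identifiability, compactness, continuity of the prior density at $\wopt$), pointwise twice differentiability does \emph{not} give the uniform $o(1)$ control of the Taylor remainder $R_n(\w)$ on a shrinking neighbourhood, nor the uniform large-deviation bound $\ell_n(\w)-\ell_n(\wopt)\le -c(\epsilon)$ needed to kill the posterior mass on $\{\|t\|>M_n\}$; one needs either a dominated third-derivative (or Lipschitz-Hessian) envelope, or -- as in the modern proofs of \citet{van2000asymptotic} (Thm.~10.1) and \citet{kleijn2012bernstein} -- one replaces the pointwise-density-plus-Scheff\'e argument entirely by local asymptotic normality (obtained from differentiability in quadratic mean) combined with the existence of uniformly consistent tests of $\wopt$ against complements of balls. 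That testing condition is precisely the tool that makes the two tail estimates ``match in a uniform sense''; without it, or without an explicit envelope assumption beyond what the theorem states, your step (iv) does not close. A secondary, smaller point: the rigorous statement is usually centered at the score-based sequence $\wopt+n^{-1/2}\mathcal{I}^{-1}(\wopt)\Delta_{n,\wopt}$, and the version centered at $\hat{\w}_n$ requires the additional (standard but not free) fact that the MLE is asymptotically linear with influence function $\mathcal{I}^{-1}(\wopt)\dot\ell_{\wopt}$, so that the two centerings differ by $o_{P}(n^{-1/2})$.
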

That is, for increasing data set size, the posterior converges in probability to a normal distribution $\mathcal{N}(\hat{\w}_n, {n}^{-1} \mathcal{I}^{-1}(\wopt))$, where $\|\cdot\|_{\text{TV}}$ denotes the total variation norm (see \citealp{van2000asymptotic}). The normal distribution is centered on the maximum likelihood estimate with covariance ${n}^{-1} \mathcal{I}^{-1}(\wopt)$, that is, the distribution gets more and more concentrated with increasing data set size.
Under the same assumptions, this implies that the expectation of the Bayes ensemble \eqref{eq:bma_sum}
converges to the maximum likelihood estimate $\f_{{\wopt}}\in\weak$ for $n\to\infty$ (while the Bayes ensemble is not necessarily in $\weak$ for finite $n$).

Thus, as already argued by \citet[][sec.~4]{masegosa2020learning}
and by \citet{minka2000bayesian}, from the perspective of general ensemble learning and the goal of finding a model with minimum risk, a sample from the posterior 
$p(\w\,|\,\D)$ does not appear to be a particularly good choice for ensemble members.
As the sample will be distributed around the maximum likelihood estimate $\f_{\wopt}\in\weak$, it cannot be expected that the members of the ensemble will be very diverse, and the lack of diversity reduces the cancellation of errors effect. The Bayes ensemble will tend to the model with the highest likelihood in $\weak$ -- and not to the model with the highest likelihood in $\strong$. 
One may well argue that the presented asymptotic argument is of little practical relevance if 
$\weak$ already has a high capacity (e.g., consists of overparameterized deep neural networks), because then
$\f_{\wopt}\in\weak$ may be close to the optimal solution. However, the important point is that even for finite $n$, the $\nem$ ensemble members can be expected to be quite similar in their predictive behavior.
Thus, we formulate the following \hypothesis:
\begin{hypo}\label{hyp:bma}
The Bayes ensemble is not a particularly good way to select and weight networks in a deep ensemble.
\end{hypo}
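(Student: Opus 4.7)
The plan is to support Claim 1 with a two-pronged argument combining a conceptual analysis built on the Bernstein--von Mises theorem just stated and an empirical evaluation on standard deep-ensembling benchmarks. Because the statement is not a closed-form mathematical identity, the first task is to operationalize ``not a particularly good way'' in terms of two concrete comparisons: (i) against a uniformly weighted deep ensemble of independently trained networks and (ii) against an ensemble whose weights are obtained by minimizing the tandem-loss PAC-Bayesian bound of Theorem~\ref{thm:tandem}. The claim then reads: on typical DNN classification problems, a Bayes ensemble matches neither (i) nor (ii) in generalization performance, and the reason it fails is traceable to the lack of cancellation of errors.

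For the conceptual part, I would first formalize the lack-of-diversity observation. Under the identifiability and regularity hypotheses of Bernstein--von Mises, the posterior $p(\w\mid\D_n)$ concentrates around $\hat{\w}_n$ at rate $1/\sqrt{n}$, so for large $n$ any finite sample $\w_1,\dots,\w_M\sim p(\w\mid\D_n)$ produces models $\f_{\w_m}$ whose outputs agree with high probability on most inputs. I would make this explicit by arguing that the expected pairwise tandem loss $\E_{\rho^2}[L(\f_{\w_i},\f_{\w_j})]$ tends to $L(\f_{\wopt})^2\cdot(1+o(1))$, i.e., to the squared single-model risk, which is exactly the regime in which the bound \eqref{eq:tandem} cannot exploit pairwise error cancellation. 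A second strand of the argument notes that the BMA limits to $\f_{\wopt}\in\weak$, while the ensemble optimum in $\strong$ need not lie in $\weak$; hence weighting by $p(\w\mid\D)$ optimizes the wrong objective for ensemble risk, mirroring the observations of \citet{minka2000bayesian} and \citet{masegosa2020learning}.

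For the empirical part, I would run controlled experiments on the four classification datasets advertised in the abstract, using state-of-the-art Bayes ensembles (the posterior-approximation methods cited in Section~\ref{sec:main}) and comparing their test accuracy, member diversity (measured by empirical tandem loss across pairs), and PAC-Bayesian risk bounds against the two reference ensembles above. Compute would be matched so that the Bayes ensemble, the uniform deep ensemble and the PAC-Bayes weighted ensemble draw on comparable training budgets. Consistency across all four datasets, with the ordering Bayes~$\lesssim$~uniform~$\lesssim$~PAC-Bayes on accuracy, together with lower pairwise diversity for the Bayes ensemble, would operationalize and corroborate the claim.

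Two obstacles are the main risk. First, the Bernstein--von Mises premise is asymptotic and assumes identifiability, both unrealistic for overparameterized DNNs whose posterior is highly multimodal and plagued by weight-space symmetries; I would counter this by observing that the posterior-approximation methods actually used in practice (SG-MCMC with limited mixing, Laplace around a single mode, MC-Dropout) effectively sample from a single basin and so inherit the same lack-of-diversity behavior even when strict identifiability fails. Second, designing a fair empirical comparison is delicate: the Bayes-ensemble baselines must be drawn from strong published implementations to avoid strawman effects, total training compute and per-ensemble budget must be equalized, and the held-out data used for PAC-Bayes weight optimization must be clearly separated and reported. I would rely on the experimental protocol in Section~\ref{sec:experiments} to control these confounds, and I would present the diversity metric as the decisive diagnostic that ties the empirical outcome back to the conceptual Bernstein--von Mises argument.
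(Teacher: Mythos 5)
Your overall strategy---pairing the Bernstein--von Mises concentration argument with an empirical comparison against uniformly weighted and tandem-loss-weighted deep ensembles on the four benchmark datasets---is essentially the route the paper takes, including the caveats about identifiability, posterior multimodality, and the fairness of the held-out split for bound optimization (which the paper handles via test-time cross-validation). The one genuinely new ingredient you propose, however, is stated incorrectly and would undermine the argument if carried out as written.

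Specifically, your formalization of the lack-of-diversity step is inverted. If the posterior concentrates and the sampled models agree with high probability, the pairwise tandem loss $L(\f_{\w_i},\f_{\w_j}) = P\big(\f_{\w_i}(x)\neq y \wedge \f_{\w_j}(x)\neq y\big)$ tends to the \emph{first-order} risk $L(\f_{\wopt})$, because for (near-)identical hypotheses the two error events coincide. The quantity $L(\f_{\wopt})^2$ that you claim as the limit is instead the benchmark attained by \emph{independent} errors---precisely the favourable, high-diversity regime in which the second-order bound improves on the first-order one. As stated, your lemma would place the Bayes ensemble in the good regime, the opposite of what you need. The correct diagnostic is that posterior concentration drives $\E_{\rho^2}[L(h,h')]$ \emph{up} toward $\E_{\rho}[L(h)]$ rather than down toward $\E_{\rho}[L(h)]^2$, which is exactly why the tandem bound cannot exploit error cancellation for a Bayes ensemble. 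The remainder of your argument---the BMA targeting the best model in $\weak$ rather than in $\strong$, and the empirical protocol with diversity measured by the empirical tandem loss---matches the paper's treatment.
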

Although this \hypothesis{} appears to be evident given the theoretical considerations outlined above \citep[see also, e.g.,][]{minka2000bayesian,masegosa2020learning}, many studies evaluate the performance of the Bayes ensemble by measuring generalization performance on a test set and provide favorable comparisons to baselines, especially single networks, suggesting the posterior predictive as a way to maximize generalization performance. This study presents unbiased comparisons of Bayes ensembles with alternative ensemble approaches to test \Hypothesis~\ref{hyp:bma}.

A Bayesian way to find the best generalizing ensemble is to lift the Bayesian reasoning from $\weak$ to $\strong$. This has been demonstrated by \citet{monteith2011turning}, who apply BMA not to $\f_{\w} \in \weak$, but to $\fens\in\strong$, where $\strong$ are all ensembles of a constant size $\nem$. In such an approach, the posterior distribution contracts around the maximum likelihood estimate of the best weighting of ensemble members.
Thus, we go from a distribution over the weights of a single neural network to a distribution over all parameters of the $\nem$ networks and the ensemble weights. This growth in the number of dimensions renders the approach computationally infeasible in larger settings.

The most basic way to create an ensemble of $M$ neural networks is to repeat the training process $M$ times using all training data and to combine the resulting networks uniformly \citep{hansen1990,lakshminarayanan2017simple}.
This procedure can be viewed as a simple ensembling, where random weight initializations and stochastic training (randomly composed mini-batches, randomized augmentations), in general stopped before convergence, are assumed to generate sufficiently diverse ensemble members.
While this approach is often explicitly referred to as non-Bayesian (see, e.g., \citealp{lakshminarayanan2017simple,ovadia2019trust,wenzel_2020_how}),
 the resulting ensemble may also be interpreted as a BMA from an approximate posterior \citep{gustafsson2020evaluating,wilson2020bayesian}.
In the former case, the motivation is to generate a diverse set of ensemble members (i.e., to find a good solution in $\strong$), in the latter the focus is on the uncertainty about the single best model (in $\weak$).
When modifying the algorithm, for example, by introducing non-uniform weighting of the ensemble members, the difference in motivation matters. For example, \emph{putting little weight on the hypothesis with maximum likelihood can be reasonable from the ensembling point of view, but is difficult to justify from the BMA perspective.}

In BMA it is also acknowledged that having samples with the same or very similar behavior in \eqref{eq:bma_sum} is not very efficient and methods have been devised to approximate the BMA using a diverse set of samples~\citep{wilson2020bayesian}, but
``[u]ltimately, the goal [of BMA] is to accurately compute the predictive
distribution'' in \eqref{eq:bma}
\citep{wilson2020bayesian}, which is a different goal than finding the hypothesis with the lowest risk in $\strong$ \citep{minka2000bayesian}.
\emph{At the core of the 
Bayesian ensembling approach is the weighting based on $p(\w | \mathcal{D})$ -- and this is suboptimal because it ignores the interactions between ensemble members.\footnote{The original BMA \eqref{eq:bma_sum} is based on independent draws from $p(w | \mathcal{D})$, thus the $\w_1,\dots,\w_M$ are independent by definition.}}

\subsection{PAC-Bayesian deep ensembles} 
\Hypothesis~\ref{hyp:bma} does not state that one cannot do better than uniform averaging given $\nem$ models from $\weak$ (as already pointed out in early works on ensembles, e.g., \citealp{krogh1997statistical}) nor that we cannot make use of priors and Bayesian reasoning to improve ensembles.
For example, there is no need to put weight on an ensemble member that is worse than random guessing or always errs when another ensemble member errs (i.e., it can never correct an error). 
Given the results by \citet{masegosa2020second} and \citet{wu:21}, it appears to be promising to weight the members of the ensemble based on PAC-Bayesian generalization bounds optimized using the tandem loss. It is crucial to consider a second-order bound that takes interactions between ensemble members -- error cancellation -- into account; otherwise the approach suffers from the same problem as BMA and concentrates all weight on a single ensemble member in the limit.
However, the bounds require for each ensemble member $\f_i$ that there are data points $\val_i$ not used for training $\f_i$ and that for each pair of models $\f_i$ and $\f_j$
the overlap $\val_i \cap \val_j$ is large enough for a good enough estimate of the tandem loss:
In \eqref{eq:tandem}, we compute
$\hat{L}_{\val_i \cap \val_j}(h_i, h_j)$ for each hypotheses pair $\f_i$ and $\f_j$ and $n$ becomes $n=\min_i|\val_i|$.
For random forests as considered by \citet{lorenzen2019pac} and \citet{masegosa2020second}, data sets $\val_i\subset \D$ naturally arise from the bagging procedure and the proposed bounds can be computed and optimized for free. When applied to deep ensembles, we have to pay by leaving data out when training the ensemble members, which can be expected to reduce the performance of the individual networks. Taking this into account, we claim:
\begin{hypo}\label{hyp:pac}
PAC-Bayesian weighting optimized using the tandem loss can improve the generalization performance of a deep ensemble, in particular if extra data for computing the bound are available, and provide nonvacuous performance guarantees.
\end{hypo}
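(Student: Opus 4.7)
Since Claim~\ref{hyp:pac} is an empirical statement rather than a theorem with a mathematical conclusion, the strategy is to construct a controlled comparison on standard classification benchmarks in which the hold-out/training split is treated as a tunable experimental axis. The plan is to first train $\nem$ deep networks as ensemble members, leaving out a validation set $\val$ disjoint from each member's training data. For each pair $(\f_i,\f_j)$ I would then tabulate the empirical tandem loss $\hat{L}_{\val}(\f_i,\f_j)$, which populates the pairwise matrix needed by Theorem~\ref{thm:tandem}. Given this matrix, minimizing the right-hand side of \eqref{eq:tandem} jointly over $\rho$ and the free parameter $\lambda\in\,]0,2[\,$ is smooth and convex in $\rho$ for fixed $\lambda$, so it can be tackled by the alternating scheme of \citet{thiemann2017strongly}. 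The output is both a concrete weighting $\rho^\star$ and a numerical value of the bound.

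To support the claim in its two parts, I would then run two comparisons. For the \emph{generalization} part, I would compare the test accuracies of $\mathrm{MV}_{\rho^\star}$ and $\mathrm{AVG}_{\rho^\star}$ against $\mvunf$ and $\avgunf$, and against Bayes ensembles built on the same pool of networks or on samples from posterior approximations trained with the same compute budget. For the \emph{nonvacuous bound} part, I would verify that plugging $\rho^\star$ and the optimized $\lambda$ into the right-hand side of \eqref{eq:tandem} yields a value strictly below the trivial upper bound on $L(\mathrm{MV}_\rho)$. To isolate the role of the hold-out size, I would sweep the fraction of data reserved for $\val$ and report test accuracy and bound value as functions of this fraction, which directly tests the qualifier ``in particular if extra data for computing the bound are available''.

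Several design details require care. To keep the prior $\pi$ data-independent as Theorem~\ref{thm:tandem} requires, $\pi$ should be fixed before looking at $\val$; a uniform prior over the fixed pool of $\nem$ models is the safest default, so that $\KL(\rho\|\pi)\le \ln \nem$ and the complexity term cannot blow up. To test the checkpoint-ensembling aspect hinted at in the claim, the pool should additionally contain intermediate checkpoints from the same training runs, so that showing PAC-Bayesian optimization down-weights weakly performing or highly correlated checkpoints makes the argument concrete. A natural control is to also train a pool under bagging so that the out-of-bag data play the role of $\val$ for free, separating the cost of withholding data from the benefit of weight optimization.

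The main obstacle I expect is the tension that the excerpt itself flags: withholding $\val$ hurts the individual $\f_i$, so the tandem-loss optimization may only pay off once $\val$ is large enough, and \citet{ortega2022diversity} report no gain at the ResNet scale. Substantiating the claim therefore hinges on identifying a regime (dataset size, architecture, diversity of the pool) in which the gain from reweighting outweighs the cost of the hold-out split, and on reporting a bound that is simultaneously rigorous and numerically nontrivial. A useful diagnostic is to vary the diversity of the pool, e.g.\ by mixing independently trained networks with snapshot checkpoints, and to check that the predicted advantage of the PAC-Bayesian weighted ensemble grows as correlations among members grow, consistent with the cancellation-of-errors argument of the preceding section.
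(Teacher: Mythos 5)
Your proposal is correct in its core and matches the paper's methodology closely: train a pool of $\nem$ networks (optionally including intermediate checkpoints), populate the pairwise empirical tandem-loss matrix on held-out data, minimize the right-hand side of \eqref{eq:tandem} by the alternating scheme over $\rho$ and $\lambda$ with a uniform prior $\pi$ (so $\KL(\rho\|\pi)\le\ln\nem$), and then report both test accuracy of $\avg$/$\mv$ against $\avgunf$/$\mvunf$ and Bayesian references, and the numerical value of the bound to establish nonvacuity, with bagging as a control. The one genuine divergence is where the hold-out data $\val$ comes from. You propose carving $\val$ out of the training set and sweeping its size; the paper instead keeps the full training set and obtains $\val$ by \emph{test-time cross-validation} following \citet{ashukha2021pitfalls}: the test set is split in half, one half is used for bound optimization and the other for evaluation, the roles are swapped, and the two unbiased estimates are averaged. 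This matters because the paper's own bagging experiments confirm exactly the obstacle you flag -- for these dataset sizes, withholding training data degrades the individual members enough to wipe out the gain from reweighting -- so your proposed sweep would likely document the failure regime rather than exhibit the claimed improvement; the test-time-CV trick is how the paper sidesteps this (while acknowledging the comparison is then not entirely fair, since methods needing no validation data could have used it for training). A second, smaller difference: the paper's headline evidence for the ``improvement'' part comes less from beating uniform weighting on independently trained networks (where the two are within $0.1$\,ppt) and more from the checkpoint/snapshot setting, where optimized weighting rescues ensembles that uniform weighting degrades, and from the dramatic tightening of the bound itself (e.g., from $0.425$ to $0.743$ for ResNet20 on CIFAR-10, and from vacuous to nontrivial for ResNet110 on CIFAR-100). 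Your design would capture this provided you compare $\avgunf$ versus $\avg$ in the ``all checkpoints'' condition rather than only in the independently-trained condition.
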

Theorem~\ref{thm:tandem} based on the tandem loss provides a bound on the classification error that holds simultaneously for all distributions $\rho$. 
Thus, given hypotheses $h_{w_1},\dots,h_{w_M}$
and data sets $\D_i$, we suggest optimizing
the second-order PAC-Bayesian bound to adapt $\rho$, which can be done very efficiently using gradient-based optimization, see Appendix~\ref{sec:opt} for details about the optimization process following \cite{wu:21}. 

Because Theorem~\ref{thm:tandem} holds for any $\rho$, it provides rigorous performance guarantees even if $\rho$ is optimized using the data $\D_i$. \emph{That is the fundamental difference to the work by \cite{masegosa2020learning} and \cite{ortega2022diversity}}, who derive a loss function based on a PAC-Bayesian bound and optimize the parameters of the hypotheses $w_1,\dots,w_M$. In this approach, the optimized loss functions do not provide performance guarantees, because the models
$h_{w_1},\dots,h_{w_M}$ are not independent of the data used for computing the loss
(the same would happen if we would optimize both $\rho$ and $w_1,\dots,w_M$ based on Theorem~\ref{thm:tandem}.
\cite{masegosa2020learning} proves a PAC-Bayesian bound on the cross-entropy that takes correlations into account. A loss function is derived on the basis of this bound, which is then optimized to build ensembles. The study considers ensembles of multi-layer perceptrons with 20 hidden units, which are jointly trained by minimizing the new loss.
\cite{ortega2022diversity} extend this work, which is, to our knowledge, the only study applying PAC-Bayesian methods to DNN ensembles.
They empirically compare uniform DNN ensembles with ensembles of DNNs that are jointly trained with newly proposed loss functions derived from a PAC-Bayesian bound on the cross-entropy. 
For larger DNNs such as ResNet20, the authors find that direct optimization of their bound does not induce ensembles with much higher diversity and with similar or even worse performance than uniform deep ensembles with individually trained members, which is in contrast to our empirical results described in \autoref{sec:experiments}. 
Apart from unsatisfactory empirical results when applied to DNNs, the bounds on the cross-entropy underlying the algorithms in \cite{masegosa2020learning} and \cite{ortega2022diversity} are not valid anymore after constructing the ensemble.
 This is in contrast to the approach proposed here, using the 0-1 loss bound given in Theorem~\ref{thm:tandem} to weight trained models.
 The bound holds for all $\lambda$ and $\rho$ simultaneously, 
 and we optimize $\lambda$ and $\rho$ on independent data, which can be done efficiently as described in Appendix~\ref{sec:opt}.

In most deep ensemble approaches, each ensemble member is the result of an independent training process, where the weight configuration is chosen by early-stopping (requiring some hold-out data) or simply after a predefined number of learning iterations (which is then also a crucial hyperparameter). 
In contrast, \citet{wenzel_2020_how} sample from the posterior using a single
stochastic gradient Markov chain Monte Carlo (SG-MCMC) sequence and SSEs combine networks from a single learning process \citep{huang2017snapshot}. That is, one process creates all members of the ensemble. 
Having a theoretically sound way to weight models in an ensemble reduces the risk that some of the models deteriorate the overall performance. The second-order PAC-Bayesian weighting performs model selection taking both individual performance and diversity into account, which should provide protection from networks that do not contribute positively to an ensemble. It allows one to add models to the ensemble that do not perform particularly well and/or are correlated. As an example, this makes it safe to include several weight configurations (checkpoints) from a single learning process to an ensemble.
It also renders using a hold-out validation dataset for early-stopping the neural network training unnecessary: It should neither be harmful to add underfitted nor overfitted models. The early-stopping data can instead be used to optimize the PAC-Bayesian weighting:
\begin{hypo}\label{hyp:ss} 
Optimizing the weighting using the tandem loss allows inclusion of several models from a training run in a way that efficiently improves performance and makes early-stopping unnecessary.
\end{hypo}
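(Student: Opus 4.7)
Claim~\ref{hyp:ss} is an empirical rather than analytic assertion, so I would validate it by a carefully designed ablation built on top of the setup already used to test Claims~\ref{hyp:bma} and~\ref{hyp:pac}. The plan is to train deep ensembles in the usual way but, for each independent training run, retain a sequence of intermediate weight checkpoints spanning from underfitted early iterates to late (possibly overfitted) ones, in addition to the iterate that would normally be selected by early-stopping. On top of this pool of checkpoints I would compare three ensemble constructions: (a) the conventional baseline that keeps only the early-stopped checkpoint per run and combines runs uniformly, (b) a pooled ensemble that keeps \emph{all} checkpoints from all runs with uniform weights, and (c) a pooled ensemble over all checkpoints with weights $\rho$ obtained by minimizing the tandem-loss bound of Theorem~\ref{thm:tandem}. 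Crucially, the hold-out data originally reserved for early-stopping in~(a) is reused as the hold-out set on which the pairwise empirical tandem losses $\hat{L}_{\val_i\cap\val_j}(h_i,h_j)$ are computed for~(c), so no extra labelled data are consumed compared with the baseline.

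\textbf{Key steps.} First, I would fix the training protocol (architecture, optimizer, learning-rate schedule, number of runs) and pre-commit a grid of checkpoint epochs that covers the underfitting-to-overfitting regime, so that the ensemble composition is not cherry-picked. Second, I would optimize $\rho$ and $\lambda$ jointly using the efficient procedure of Appendix~\ref{sec:opt}, obtaining both the weighted majority vote $\mv$ and the numerical value of the bound in~\eqref{eq:tandem}. Third, I would compare test error and the bound value across (a), (b), and (c) on the four benchmark datasets. The quantitative prediction supporting the claim is that (c) matches or beats (a) in test accuracy, while (b) is typically worse than (a) because uninformative checkpoints drag down the uniform vote. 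If this pattern is observed, it simultaneously establishes that extra checkpoints are safe under tandem-loss weighting, that they improve performance, and that the role of early-stopping is effectively subsumed by the PAC-Bayesian weight optimization.

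\textbf{Main obstacle.} The hard part will be ruling out alternative explanations for any observed gap between (c) and (a). I would in particular include a control in which~(a) is padded to the same total ensemble size as~(c) by adding more independent training runs rather than more checkpoints per run, so that any improvement is attributable to the weighting scheme rather than simply to ensemble size. I would further inspect the learned weight vectors to verify that the optimizer genuinely assigns small $\rho$-mass to underfitted and overfitted checkpoints, and confirm that the PAC-Bayesian bound remains nonvacuous after the candidate set is enlarged, since the $\mathrm{KL}(\rho\|\pi)$ term in~\eqref{eq:tandem} grows with the number of hypotheses and shifts the bound-optimization trade-off. Robustness of this behavior across datasets of different sizes is the most likely place where the claim would need qualification rather than outright support.
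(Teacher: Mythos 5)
Your proposal matches the paper's own empirical validation of Claim~\ref{hyp:ss}: the paper compares uniform-last, uniform-all, and tandem-weighted-all ensembles (for both standard checkpointing and cyclic-schedule snapshot ensembles), inspects the learned weight vectors to confirm that low-performing checkpoints receive little mass, and reports the resulting nonvacuous bounds, finding exactly the pattern you predict --- uniform inclusion of all checkpoints degrades performance while tandem weighting recovers or improves it. The only practical divergence is that the paper estimates the tandem losses via test-time cross-validation on the held-out test set rather than literally recycling the early-stopping split (a choice it flags as a caveat in Appendix~\ref{app:ttcv}), and it does not include your size-matched control of padding the last-checkpoint baseline with additional independent runs.
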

In the following, we will study the three claims empirically.

\section{Experiments and results}\label{sec:experiments}
\subsection{Experimental setup}
Four different datasets and four neural network architectures were considered.\footnote{We consumed approximately 320 GPU days for all experiments on a local cluster.} We evaluated a CNN LSTM \citep{yenter2017cnnlstm} on the IMDB binary classification benchmark \citep{imdb} and compared it with a state-of-the-art Bayesian approximate inference method from \citet{wenzel_2020_how} (referred to as cSGHMC-ap), who combine stochastic gradient Langevin dynamics (SGLD) \citep{chen2014sghmc, welling2011bayesian}
with a cyclical learning rate schedule \citep{zhang2019cyclical}, and adaptive preconditioning \citep{li2016preconditioner, ma2015preconditioner}. Furthermore, we considered the multi-class data sets CIFAR-10 and CIFAR-100 \citep{alex2009cifar} with ResNet110 \citep{he2016deep} and WideResNet28-10 (WRN28-10) \citep{Zagoruyko2016WRN} architectures, for which \citet{ashukha2021pitfalls} and \citet{vadera2022ursabench} provide reference results for a variety of Bayesian methods. Lastly, the ResNet50 was evaluated on the EyePACS dataset that contains diabetic retinopathy diagnoses distinguishing five degrees of severity. It was introduced by \citet{band2021benchmarking} as a benchmark for Bayesian approximate inference methods. The authors provide results for a variety of Bayesian neural networks and ensembles of those, creating ``deep ensembles of Bayes ensembles''. As we are concerned with a comparison to Bayes ensembles, we focus on the results of the individual Bayesian methods as a reference. The hyperparameters for each experiment are given in Appendix \ref{app:hyperparams}.

To optimize the weighting while retaining the original training data set size, \textit{test-time cross validation} as introduced by \citet{ashukha2021pitfalls} was employed. We used 50\% of the hold-out data for bound optimization and the other half for testing. This was then repeated with the two subsets switching roles, and the results were averaged to decrease the variance for the unbiased performance estimate, see Appendix \ref{app:ttcv} for details.

\newcommand{\Mt}{\nem_{\text{total}}}
For each dataset and model architecture, a simple deep ensemble was constructed based on the reference paper's single-run hyperparameters. Intermediate checkpoints were stored and either included (referred to as \textit{all} from now) or ignored in the final ensemble (\textit{last}). Further, ensembles with sub-sampled training sets (bagging) as well as snapshot ensembles (SSE) using a cyclical learning rate were formed. Finally, the trade-off between the size of the ensemble and the number of training epochs was studied in a sequential setting in Appendix~\ref{app:training_time}.

We trained $\Mt$ networks for each setting, where $\Mt$
was 30, 50, and 10 for CIFAR, IMDB, and EyePACS, respectively.
Ensembles of size $\nem$ were created by sampling without replacement from these models. This was repeated five times for every setting, and we report mean and standard deviation $\sigma$ of these five trials.
Generalization bounds are reported for $\delta=0.05$.

\begin{figure}[ht]
    \centering
    \includegraphics[width=0.49\textwidth]{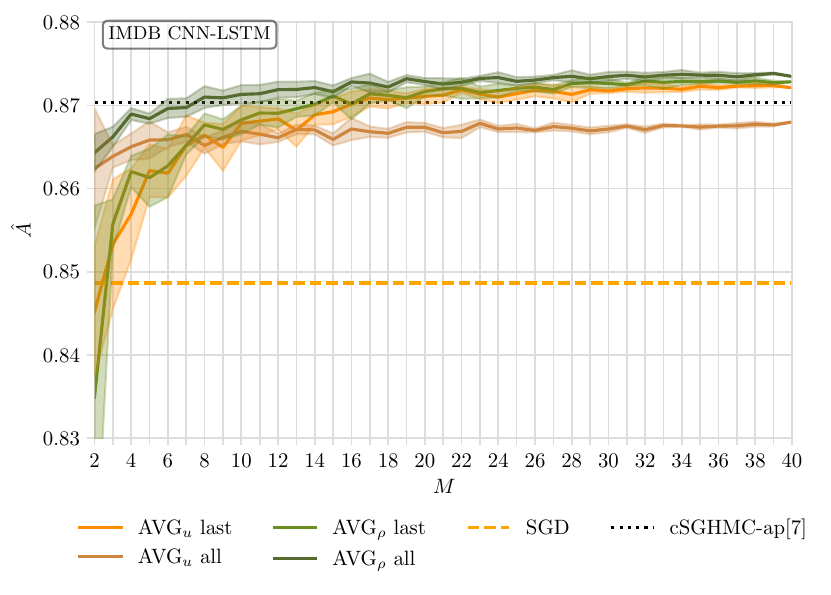}
    \includegraphics[width=0.49\textwidth]{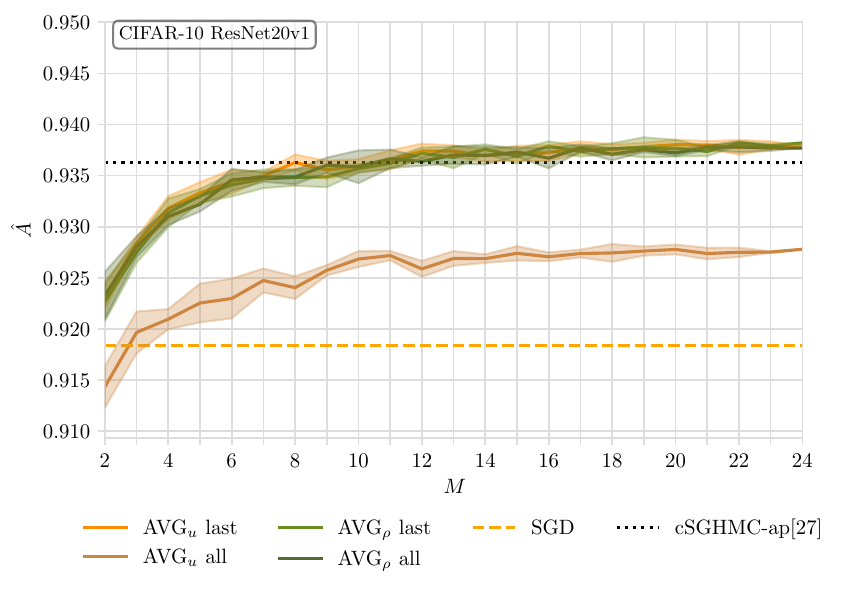}
        \includegraphics[width=0.49\textwidth]{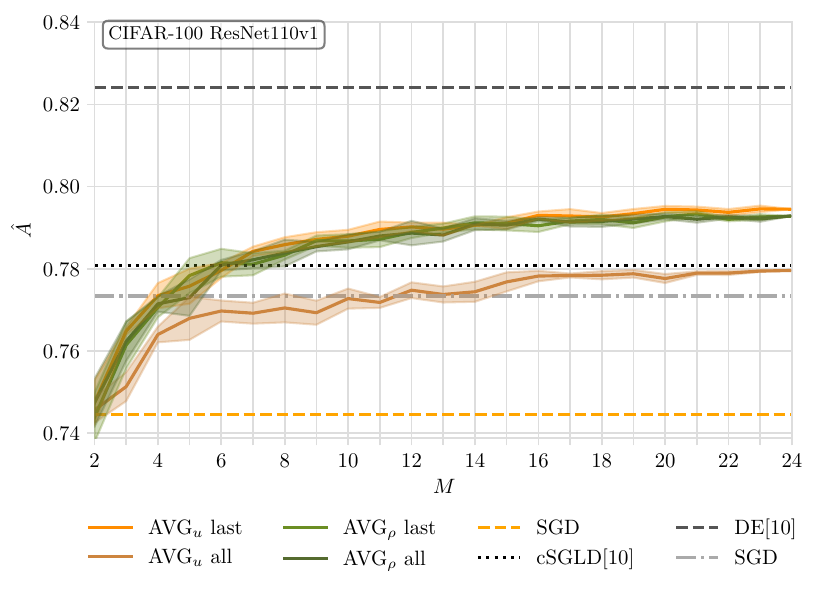}
    \includegraphics[width=0.49\textwidth]{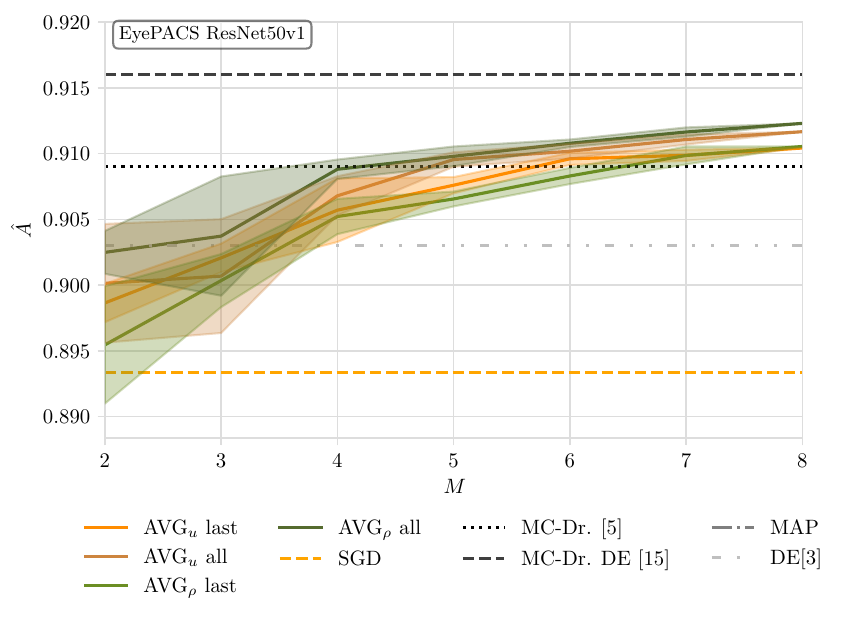}
    \caption{Mean test accuracy $\hat{A}$ $\pm\sigma$ vs.~ensemble size $\nem$ over five ensembles for uniformly and PAC-Bayesian weighted deep ensembles ($\avgunf$ and $\avg$), using either only the last or all training checkpoints, and the best single model (SGD). References are Bayesian ensembles cSGHMC-ap  \citep{wenzel_2020_how}, cSGLD \citep{ashukha2021pitfalls}, \ab{MC-Dr}opout and \ab{MC-Dr}opout \ab{D}eep \ab{E}nsemble (i.e., ensemble of Bayesian ensembles), as well as simple \ab{D}eep \ab{E}nsembles for EyePACS from \citet{band2021benchmarking}. Numbers in brackets indicate the ensemble sizes of the baselines. 
    Additional results for other settings are shown in Figure \ref{fig:original_acc} in the appendix.}
    \label{fig:wenzel_acc}
\end{figure}

\subsection{Results}

\paragraph{Bayesian vs.~uniform deep ensembles (\Hypothesis~\ref{hyp:bma}).} We compared the uniformly weighted deep ensembles across all datasets and model architectures with Bayesian ensembles from the literature. 
Our results shown in Table~\ref{tab:exp_acc} and figures~\ref{fig:wenzel_acc}
and \ref{fig:original_acc}
demonstrate that uniformly weighted deep ensembles can match the performance of models based on BMA. On IMDB, a uniform deep ensemble of size $M\ge 20$ matched the reported performance from \citet{wenzel_2020_how},  and equivalent results were observed for all other datasets and architectures. On CIFAR-100 with a ResNet110, the uniform deep ensemble outperformed the Bayesian reference from \citet{ashukha2021pitfalls} by 1.3\,ppt. Interestingly, the SGD and deep ensemble results from \citeauthor{ashukha2021pitfalls} are notably better than ours. As the authors base their Bayesian methods on pre-trained SGD solutions, we can assume that their Bayesian reference performances could also profit from the better accuracies achieved in their training environment. Thus, it is remarkable that our uniform deep ensembles nevertheless matched the Bayesian references, and we would expect an even better performance if compared in the same environment. 

In accordance with the literature, bagging reduced the performance, see Appendix~\ref{app:bagging} for details. The experiments on the trade-off between $\nem$ and the number of training epochs showed that, even in a sequential setting with a strict limit on the number of overall training epochs, deep ensembles could match the BMA approach, see Appendix~\ref{app:training_time} for details.



\newcommand{\Original}{Simple}
\newcommand{\drei}[1]{
\multirow{3}{*}{$\begin{matrix}
\vline  \\
 \text{#1}  \\
\vline 
\end{matrix}$}}
\newcommand{\zwei}[1]{
\multirow{2}{*}{$\begin{matrix}
 \text{#1}  \\
\vline 
\end{matrix}$}}

\begin{table}[ht]
    \caption{Mean test accuracies $\hat{A}$ over five ensembles, ensemble size is given in  brackets (for other ensemble sizes see figures \ref{fig:original_acc}, \ref{fig:sse_acc}, \ref{fig:bagging_acc}, and \ref{fig:epoch_budget_acc} in the appendix). The \ab{\Original} results refer to simple deep ensembles with SGD hyperparameters from the reference papers. The subscript $\rho$ indicates that the weighting of the ensemble members was based on minimizing a PAC-Bayesian tandem bound. The
    \emph{Bayesian reference} results are taken from \citet{ashukha2021pitfalls} (CIFAR ResNet110 \& WRN28-10), \citet{band2021benchmarking} (EyePACS) and \citet{wenzel_2020_how} (IMDB, CIFAR-10 ResNet20).}
    \label{tab:exp_acc}
    \centerline{
        \begin{tabular}{@{}l@{\;\;}l@{\;\;}l@{\;\;}l@{\;\;}l@{\;\;}l@{\;\;}l@{\;\;}l@{}}
        \toprule
            Model &  Experiment & $\hat{A}(\avgunf)$ & $\hat{A}(\avg)$ & $\hat{A}(\avgunf)$ & $\hat{A}(\avg)$ & SGD & Bayesian\\
            (Dataset) &  & {last}  & {last}  &  {all} &  {all}  &   & reference\\
        \midrule
            \multirow{3}{*}{\shortstack[l]{CNN LSTM\\(IMDB)}} & \Original & 0.871[40] & 0.872[40] & -- & -- & 0.853 & \drei{0.870[7]} \\
            & Checkp. & 0.872[40] & 0.873[40] & 0.868[40$\cdot$5] & 0.873[40$\cdot$5] & 0.849 & \\
            & SSE & 0.729[8] & 0.741[8] & 0.858[8$\cdot$10] & \textbf{0.874[8$\cdot$10]} & 0.861 &  \\
        \midrule
            \multirow{2}{*}{\shortstack[l]{ResNet20\\(CIFAR-10)}} & \Original & \textbf{0.938[24]} & \textbf{0.938[24]} & 0.928[24$\cdot$5] & \textbf{0.938[24$\cdot$5]} & 0.918 & \zwei{0.936[27]} \\
            & SSE & 0.894[24] & 0.894[24] & 0.903[24$\cdot$5] & 0.908[24$\cdot$5] & 0.896 & \\
        \midrule
            \multirow{2}{*}{\shortstack[l]{ResNet110\\(CIFAR-10)}} & \Original & \textbf{0.956[20]} & \textbf{0.956[20]} & 0.944[20$\cdot$5] & \textbf{0.956[20$\cdot$5]} & 0.943 & \zwei{0.955[10]} \\
            & SSE & 0.954[20] & 0.955[20] & 0.951[20$\cdot$5] & 0.954[20$\cdot$5] & 0.941 & \\
        \midrule
            \multirow{2}{*}{\shortstack[l]{WRN28-10\\(CIFAR-10)}} & \Original & \textbf{0.967[24]} & \textbf{0.967[24]} & 0.96[24$\cdot$5] & \textbf{0.967[24$\cdot$5]} & 0.961 & \zwei{\textbf{0.967[10]}} \\
            & SSE & 0.964[24] & 0.963[24] & 0.964[24$\cdot$5] & 0.964[24$\cdot$5] & 0.956 & \\
        \midrule
            \multirow{2}{*}{\shortstack[l]{ResNet110\\(CIFAR-100)}} & \Original & \textbf{0.794[24]} & 0.793[24] & 0.78[24$\cdot$5] & 0.793[24$\cdot$5] & 0.745 & \zwei{0.781[10]} \\
            & SSE & 0.79[24] & 0.79[24] & 0.788[24$\cdot$5] & \textbf{0.794[24$\cdot$5]} & 0.733 & \\
        \midrule
            \multirow{2}{*}{\shortstack[l]{WRN28-10\\(CIFAR-100)}} & \Original & \textbf{0.83[24]} & \textbf{0.83[24]} & 0.827[24$\cdot$5] & 0.829[24$\cdot$5] & 0.798 & \zwei{0.828[10]} \\
            & SSE & 0.818[24] & 0.818[24] & 0.822[24$\cdot$5] & 0.826[24$\cdot$5] & 0.791 & \\
        \midrule
            \multirow{2}{*}{\shortstack[l]{ResNet50\\(EyePACS)}} & \Original & 0.91[8] & 0.91[8] & 0.912[8$\cdot$6] & \textbf{0.913[8$\cdot$6]} & 0.895 & {0.909[5]} \\
            \\
        \bottomrule
        \end{tabular}
        }
\end{table}

\paragraph{Bayesian vs.~PAC-Bayesian deep ensembles (\Hypothesis~\ref{hyp:pac}).} 
Optimizing the weighting by minimizing the second-order PAC-Bayesian bound matched the uniform performance in all cases  $\pm 0.1$\,ppt, while even slightly improving the ensemble performance on IMDB (Table \ref{tab:exp_acc}: $\hat{A}(\avgunf)$ last vs.~$\hat{A}(\avg)$ last). At the same time, the PAC generalization guarantees in Table \ref{tab:exp_bounds_short} -- which still hold after optimization -- tightened dramatically, most significantly on CIFAR-10 ResNet20 by 31.8\,ppt, from 0.425 to 0.743. For ResNet110 on CIFAR-100, the optimization was necessary to get a nontrivial bound, increasing from 0.0 to 0.127. The PAC-Bayesian guarantees hold for $\mv$ as the aggregation method, and $\avg$ usually performs slightly better in practice. However, Table \ref{tab:exp_bounds} in the appendix shows how similar the two aggregation methods behave. That is, a slight decrease in performance can give rigorous generalization bounds. 
 Figures \ref{fig:imdb_weight} and \ref{fig:cifar_weight} exemplify the weighting from minimizing the tandem loss and a first-order PAC generalization bound. They show that the second-order objective function avoids putting all  weight on a single hypothesis, while the first-order bound optimization does not.

Computing and optimizing the PAC-Bayesian bound requires additional hold-out data, which biases the comparison with uniform weighting.
Instead of using extra data, one could use bagging \citep{lorenzen2019pac,masegosa2020second}.
The results of our bagging experiments are presented in Appendix \ref{app:bagging}.
For our ensembles of neural networks and rather small data sets, 
bagging decreased the performance, which is in line with the literature \citep{lakshminarayanan2017simple, lee2015m}.

\paragraph{Bayesian vs.~second-order PAC-Bayesian deep snapshot ensembles (\Hypothesis~\ref{hyp:ss}).} 
We evaluated snapshot ensembling \citep{huang2017snapshot} as well as simply taking checkpoints from the original learning rate schedule. In SSEs, going from one to all intermediate checkpoints with optimized weighting improved predictive performance in all but one case.
On IMDB, this improvement was most pronounced with 13.3ppt (Table \ref{tab:exp_acc}: $\hat{A}(\avg)$ last vs.~$\hat{A}(\avg)$ all). An SSE with 8 members (and 10 snapshots per member) gave the best accuracy with a training budget ($8\cdot50=400$ epochs) below the budget of the Bayesian reference method (500 epochs). 
SSEs include networks when the learning rate is lowest, just before re-starting the learning rate cycle. However, including checkpoints following the original schedule also matched the baseline of taking only the last checkpoint across all experiments. This straightforward approach improved performance on IMDB and EyePACS (Table \ref{tab:exp_acc}: $\hat{A}(\avg)$ last vs.~$\hat{A}(\avg)$ all). This is in contrast to the uniform weighting results, where adding the snapshots decreased the performance compared to only taking the last network. 

As hypothesized, the tandem loss bound optimization allows one to include worse performing models in the ensemble (at no additional training cost) while maintaining or improving performance. Figures \ref{fig:imdb_weight} and \ref{fig:cifar_weight} show that the second-order PAC-Bayesian approach picks snapshots from the different training runs. This can be seen most clearly from the regular pattern in Figure~\ref{fig:cifar_weight}, bottom left. Figure~\ref{fig:cifar_weight} shows the differences between the \emph{Simple} and \emph{SSE} training setup. In the former, there was a tendency to pick the last snapshots of each network training run; in the latter, which used a cyclic learning rate schedule, intermediate snapshots were also selected. 

Thus, optimizing the weights is crucial when considering all models from a training run.
We attribute this to the importance of filtering out models with low performance. 
When considering all models and optimizing the weighting, the \ab{\Original} setting using SGD hyperparameters from the reference papers performed on par with the SSE setting using a cyclic learning rate. Thus, using a special learning rate schedule when combining models from a single training run was not necessary when the proposed weighting method was employed.


\begin{figure}[ht!]
    \centering
    \includegraphics[width=0.49\textwidth]{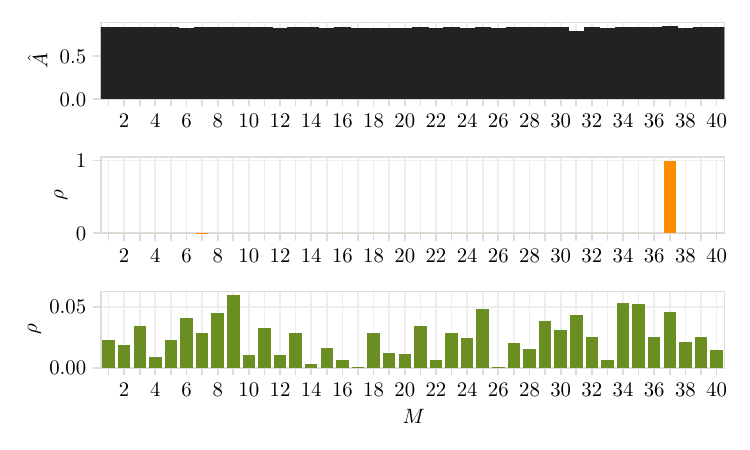}
    \includegraphics[width=0.49\textwidth]{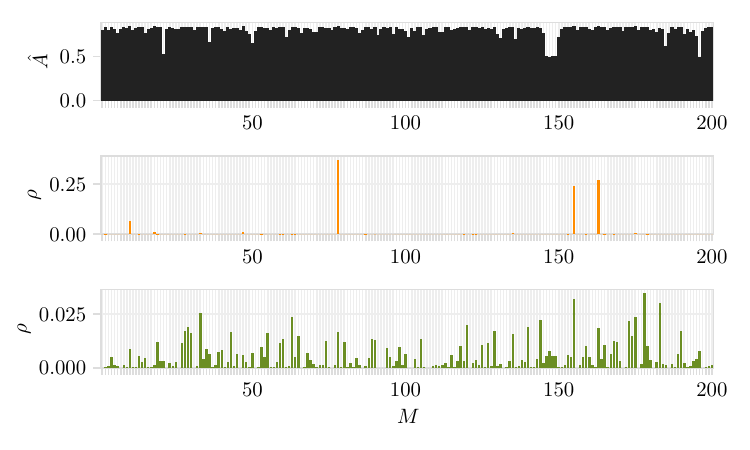}
    \caption[IMDB: Weight distribution per member]{IMDB accuracies ($\hat{A}$, top) and weight distribution per member for first-order (middle, \citealp{lorenzen2019pac}) and tandem bound weighting (bottom) for the \ab{\Original}\ setting  considering only a single network per training process (left) and when adding checkpoints (right). The number of training runs was 40. Four intermediate checkpoints were added, giving a total of five weight configurations per training process.}
    \label{fig:imdb_weight}
\end{figure}
\begin{figure}[ht!]
    \centering
    \includegraphics[width=0.49\textwidth]{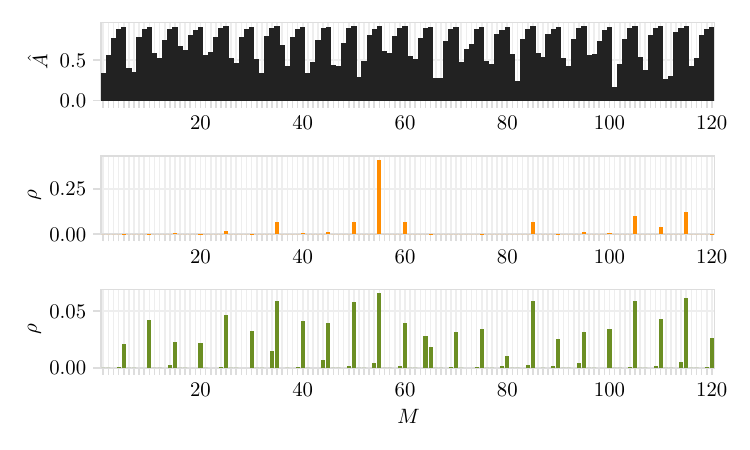}
    \includegraphics[width=0.49\textwidth]{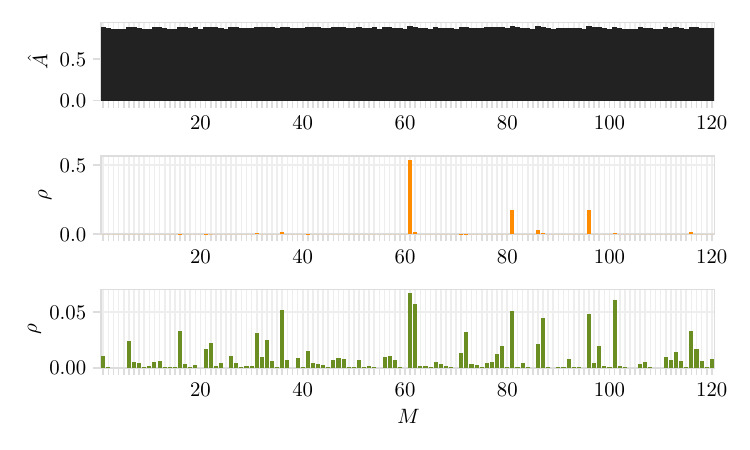}
    \caption[CIFAR-10 ResNet20: Weight distribution per member]{ResNet20 on CIFAR-10 accuracies (top) and weight distribution per member for first-order (middle) and tandem bound weighting (bottom) for the \ab{\Original}\ setting  (left) and \ab{SSE} (right), both including all checkpoints (24 training processes, five checkpoints per process).}
    \label{fig:cifar_weight}
\end{figure}


\begin{table}[ht]
    \caption{Test accuracies for $\avg$ and $\mv$ and PAC-Bayesian bounds ($\delta=0.05$) for all \ab{\Original} experiments (see Table~\ref{tab:exp_acc}). An overview over all experiments is shown in Figure \ref{tab:exp_bounds} in the appendix. The bounds were computed for the \ab{all} setting, except for IMDB, where checkpoints were taken in a separate experiment due to early-stopping in \ab{\Original}; $\text{MV}_{\text{u}}$ and $\mv$ refer to majority voting with uniform and optimized $\rho$, respectively.}
    \centering
    \begin{tabular}{@{}l l l l l l l l@{}}
    \toprule
        \shortstack[l]{Model} & Dataset & 
  \shortstack[l]{Bound\\$\mvunf$} &\shortstack[l]{Bound\\$\mv$} & 
        \shortstack[l]{$\hat{A}(\avg)$\\last} & \shortstack[l]{$\hat{A}(\mv)$\\last} & \shortstack[l]{$\hat{A}(\avg)$\\all} & \shortstack[l]{$\hat{A}(\mv)$\\all} \\
    \midrule
        \multirow{1}{*}{\shortstack[l]{CNN LSTM}} & IMDB & 0.576 & 0.594 & 0.873 & 0.872 & 0.873 & {0.874} \\
        \multirow{1}{*}{\shortstack[l]{ResNet20}} & CIFAR-10 & 0.425 & 0.743 & {0.938} & {0.938} & {0.938} & {0.938} \\
        \multirow{1}{*}{\shortstack[l]{ResNet110}} & CIFAR-10 & 0.559 & 0.792 & {0.956} & {0.956} & {0.956} & 0.955 \\
        \multirow{1}{*}{\shortstack[l]{WRN28-10}} & CIFAR-10 & 0.794 & 0.833 & {0.967} & 0.966 & {0.967} & 0.966 \\
        \multirow{1}{*}{\shortstack[l]{ResNet110}} & CIFAR-100 & 0.0 & 0.127 & 0.793 & 0.791 & 0.793 & 0.791 \\
        \multirow{1}{*}{\shortstack[l]{WRN28-10}} & CIFAR-100 & 0.172 & 0.277 & {0.83} & 0.829 & 0.829 & 0.829 \\
        \multirow{1}{*}{\shortstack[l]{ResNet50}} & EyePACS & 0.686 & 0.695 & 0.911 & 0.909 & {0.912} & {0.912} \\
    \bottomrule
    \end{tabular}
    \label{tab:exp_bounds_short}
\end{table}

\section{Discussion and conclusions}\label{sec:conclusions}
Bayesian model averaging (BMA) can improve uncertainty quantification and calibration of deep neural networks (DNNs). 
 Several lines of research study the generalization performance of the Bayes ensemble and try to improve it, where performance is measured in terms of accuracy on a test set. In general, the BMA indeed improves predictive performance \emph{compared to a single neural network}. However, the Bayes ensemble is often -- implicitly and explicitly -- brought forward as a particularly good and theoretically justified approach to weight members of an ensemble to improve generalization performance.
In this work, we argue conceptually in the line of \cite{masegosa2020learning}, \cite{minka2000bayesian} and \cite{ortega2022diversity} and provide additional evidence
that the predictive posterior is not a particularly good basis for selecting and weighting the networks in a deep ensemble.
A proper Bayesian way to build a deep ensemble would be to apply Bayesian inference to the space of ensembles \citep{monteith2011turning}, but this is computationally expensive and -- also for that reason -- is not what is typically proposed in the research directions our study addresses. 
A simple uniformly weighted deep ensemble can be expected to perform on par with more sophisticated BMA based approaches and it can be created more efficiently. It does not require involved sampling from the posterior and training is embarrassingly parallel.

Our proposed optimization of the weighting of DNN ensemble members using the tandem loss 
and additional data can improve uniform deep ensembles.
Using a second-order bound to derive the optimization objective is crucial, because minimizing a first-order bound will lead to a lack of diversity similar to BMA \citep[see also][]{lorenzen2019pac}.

The PAC-Bayesian weighting allows us to include several models from a training run, which can improve performance efficiently and make early-stopping unnecessary. It has already been argued by \citet{sollich1995learning} that having models overfitted to a subset of the training data in a properly weighted ensemble need not be harmful.
The proper weighting is important, a na\"ive uniform combination of many models from the same run can deteriorate performance.

While no hold-out data for early-stopping is needed, it is a limitation of the weighting approach that additional hold-out data is required.
For the rather small data sets and the models considered in this study, experiments with canonical bagging showed that leaving out training data impaired the performance of the ensemble (as also observed by \citealp{lakshminarayanan2017simple}, and \citealp{lee2015m}), 
even though bagging increases diversity.
%
However, the data used for optimization of the weighting provide at the same time a rigorous, non-trivial upper bound on the generalization performance -- and getting such formal guarantees generally requires independent data, see \cite{gastpar2024fantastic} for theoretical arguments in the overparameterized DNN setting. The PAC-Bayesian bounds still hold after optimization.
In our case, this optimization was necessary to elevate some of the bounds from being trivial to nonvacuous guarantees on the ensemble performance.
The bound presumes model combination by majority voting as prediction method, however, in our experiments we did not observe a big difference in generalization performance between majority voting and averaging predictive distributions.
Thus, PAC-Bayesian optimization of the ensemble weights is highly recommended if not all data available for model construction is needed for training so that a subset can be employed for optimization and getting performance guarantees. Combining several solutions (i.e., checkpoints or snapshots) from a single training trial with PAC-Bayesian optimization presents a novel way to utilize hold-out data -- for ensemble member weighting instead of checkpoint selection using early stopping.

\section*{Acknowledgments}
\iftrue
CI acknowledges support through the Global Wetland Center,
funded by the Novo Nordisk Foundation under grant number NNF23OC0081089 in the call ``Challenge Programme 2023 -- Prediction of Climate
Change and Effect of Mitigating Solutions'', and the Pioneer Centre for AI, DNRF grant number P1.
\fi

\section*{Code availability}
Code for reproducing the experiments will be made available by the authors.
Our work builds on several, publicly available code repositories and datasets. The IMDB training code including the CNN LSTM model was based on the \href{https://github.com/keras-team/keras/blob/1a3ee8441933fc007be6b2beb47af67998d50737/examples/imdb_cnn_lstm.py}{Keras example}, similarly to the CIFAR training code from the CIFAR-10 \href{https://github.com/keras-team/keras/blob/1a3ee8441933fc007be6b2beb47af67998d50737/examples/cifar10_resnet.py}{Keras example}, including all ResNets. Both examples are distributed under the Apache 2.0 license. The Gaussian prior for the CNN LSTM on IMDB was based on \href{https://github.com/google-research/google-research/tree/master/cold_posterior_bnn}{the code from \citet{wenzel_2020_how}}, also under Apache 2.0 license. The preprocessing routine of the EyePACS dataset was adapted from \href{https://github.com/google/uncertainty-baselines/blob/main/uncertainty_baselines/datasets/diabetic_retinopathy_dataset_utils.py}{Google's uncertainty-baselines repository}, again under the Apache 2.0 license. The Wide ResNet \citep{Zagoruyko2016WRN} implementation was taken from a publicly available \href{https://raw.githubusercontent.com/titu1994/Wide-Residual-Networks/master/wide_residual_network.py}{GitHub repository}. Finally, the implementation of the tandem loss PAC-Bayesian bound optimization \citep{masegosa2020second} was taken and extended from the \href{https://github.com/StephanLorenzen/MajorityVoteBounds}{official implementation by the authors}.

{
\small
\bibliographystyle{tmlr}
\bibliography{ensemble}

\begin{thebibliography}{87}
\providecommand{\natexlab}[1]{#1}
\providecommand{\url}[1]{\texttt{#1}}
\expandafter\ifx\csname urlstyle\endcsname\relax
  \providecommand{\doi}[1]{doi: #1}\else
  \providecommand{\doi}{doi: \begingroup \urlstyle{rm}\Url}\fi

\bibitem[Adlam et~al.(2020)Adlam, Snoek, and Smith]{adlam2020cold}
Ben Adlam, Jasper~Roland Snoek, and Sam Smith.
\newblock Cold posteriors and aleatoric uncertainty.
\newblock In \emph{International Conference on Machine Learning (ICML) Workshop on Uncertainty and Robustness in Deep Learning}, 2020.

\bibitem[Aitchison(2021)]{aitchison2021statistical}
Laurence Aitchison.
\newblock A statistical theory of cold posteriors in deep neural networks.
\newblock In \emph{International Conference on Learning Representations (ICLR)}, 2021.

\bibitem[Ashukha et~al.(2020)Ashukha, Lyzhov, Molchanov, and Vetrov]{ashukha2021pitfalls}
Arsenii Ashukha, Alexander Lyzhov, Dmitry Molchanov, and Dmitry Vetrov.
\newblock Pitfalls of in-domain uncertainty estimation and ensembling in deep learning.
\newblock In \emph{International Conference on Learning Representations}, 2020.

\bibitem[Bachmann et~al.(2022)Bachmann, Noci, and Hofmann]{bachmann2022tempering}
Gregor Bachmann, Lorenzo Noci, and Thomas Hofmann.
\newblock How tempering fixes data augmentation in {B}ayesian neural networks.
\newblock In \emph{International Conference on Machine Learning (ICML)}, volume~39, 2022.

\bibitem[Band et~al.(2021)Band, Rudner, Feng, Filos, Nado, Dusenberry, Jerfel, Tran, and Gal]{band2021benchmarking}
Neil Band, Tim G.~J. Rudner, Qixuan Feng, Angelos Filos, Zachary Nado, Michael~W Dusenberry, Ghassen Jerfel, Dustin Tran, and Yarin Gal.
\newblock Benchmarking {B}ayesian deep learning on diabetic retinopathy detection tasks.
\newblock In \emph{Advances in Neural Processing Systems (NeurIPS) Datasets and Benchmarks Track (Round 2)}, 2021.

\bibitem[Bernstein(1917)]{bernstein1917}
Sergei Bernstein.
\newblock Theory of probability, 1917.
\newblock In Russian, cited after \cite{kleijn2012bernstein}.

\bibitem[Bishop \& Bishop(2023)Bishop and Bishop]{bishop2023deep}
Christopher~M Bishop and Hugh Bishop.
\newblock \emph{Deep Neural Networks}.
\newblock Springer, 2023.

\bibitem[Buschjäger et~al.(2020)Buschjäger, Pfahler, and Morik]{buschjäger2020generalized}
Sebastian Buschjäger, Lukas Pfahler, and Katharina Morik.
\newblock Generalized negative correlation learning for deep ensembling.
\newblock \emph{arXiv preprint arXiv:2011.02952}, 2020.

\bibitem[Caprio et~al.(2024)Caprio, Dutta, Jang, Lin, Ivanov, Sokolsky, and Lee]{caprio2024credalbayesiandeeplearning}
Michele Caprio, Souradeep Dutta, Kuk~Jin Jang, Vivian Lin, Radoslav Ivanov, Oleg Sokolsky, and Insup Lee.
\newblock Credal {Bayesian} deep learning.
\newblock \emph{arXiv preprint arXiv:2302.09656}, 2024.

\bibitem[Chen et~al.(2014)Chen, Fox, and Guestrin]{chen2014sghmc}
Tianqi Chen, Emily Fox, and Carlos Guestrin.
\newblock Stochastic gradient {H}amiltonian {M}onte {C}arlo.
\newblock In \emph{International Conference on Machine Learning (ICML)}, 2014.

\bibitem[D'Angelo \& Fortuin(2021)D'Angelo and Fortuin]{dangelo2021repulsive}
Francesco D'Angelo and Vincent Fortuin.
\newblock Repulsive deep ensembles are {B}ayesian.
\newblock In \emph{Advances in Neural Processing Systems (NeurIPS)}, volume~34, 2021.

\bibitem[Dietterich(2000)]{dietterich2000ensemble}
Thomas~G Dietterich.
\newblock Ensemble methods in machine learning.
\newblock In \emph{International Workshop on Multiple Classifier Systems}, pp.\  1--15. Springer, 2000.

\bibitem[Domingos(2000)]{domingos2000bayesian}
Pedro Domingos.
\newblock Bayesian averaging of classifiers and the overfitting problem.
\newblock In \emph{International Conference on Machine Learning (ICML)}, pp.\  223--230, 2000.

\bibitem[Dugas et~al.(2015)Dugas, Jared, Jorge, and Cukierski.]{eyepacs_website}
Emma Dugas, Jared, Jorge, and Will Cukierski.
\newblock Diabetic retinopathy detection.
\newblock \url{https://www.kaggle.com/competitions/diabetic-retinopathy-detection}, 2015.
\newblock Accessed: 15.04.2024.

\bibitem[Dziugaite \& Roy(2017)Dziugaite and Roy]{dziugaite2017computing}
Gintare~Karolina Dziugaite and Daniel~M Roy.
\newblock Computing nonvacuous generalization bounds for deep (stochastic) neural networks with many more parameters than training data.
\newblock In \emph{Conference on Uncertainty in Artificial Intelligence (UAI)}, volume~33, 2017.

\bibitem[Eckhardt \& Lee(1985)Eckhardt and Lee]{eckhardt:85}
Dave~E. Eckhardt and Larry~D. Lee.
\newblock A theoretical basis for the analysis of multiversion software subject to coincident errors.
\newblock \emph{IEEE Transactions on Software Engineering}, SE-11\penalty0 (12), 1985.

\bibitem[Farquhar et~al.(2020)Farquhar, Osborne, and Gal]{farquhar2020radial}
Sebastian Farquhar, Michael~A Osborne, and Yarin Gal.
\newblock Radial {B}ayesian neural networks: Beyond discrete support in large-scale {B}ayesian deep learning.
\newblock In \emph{International Conference on Artificial Intelligence and Statistics (AISTATS)}, 2020.

\bibitem[Folgoc et~al.(2021)Folgoc, Baltatzis, Desai, Devaraj, Ellis, Manzanera, Nair, Qiu, Schnabel, and Glocker]{folgoc2021mc}
Loic~Le Folgoc, Vasileios Baltatzis, Sujal Desai, Anand Devaraj, Sam Ellis, Octavio E~Martinez Manzanera, Arjun Nair, Huaqi Qiu, Julia Schnabel, and Ben Glocker.
\newblock {Is MC Dropout Bayesian?}
\newblock \emph{arXiv preprint arXiv:2110.04286}, 2021.

\bibitem[Fortuin et~al.(2022)Fortuin, Garriga-Alonso, Ober, Wenzel, Ratsch, Turner, van~der Wilk, and Aitchison]{fortuin2022bayesian}
Vincent Fortuin, Adri{\`a} Garriga-Alonso, Sebastian~W. Ober, Florian Wenzel, Gunnar Ratsch, Richard~E Turner, Mark van~der Wilk, and Laurence Aitchison.
\newblock Bayesian neural network priors revisited.
\newblock In \emph{International Conference on Learning Representations}, 2022.

\bibitem[Freund et~al.(1996)Freund, Schapire, et~al.]{freund1996boosting}
Yoav Freund, Robert~E Schapire, et~al.
\newblock Experiments with a new boosting algorithm.
\newblock In \emph{International Conference on Machine Learning (ICML)}, volume~96, 1996.

\bibitem[Gal \& Ghahramani(2016)Gal and Ghahramani]{gal2016dropout}
Yarin Gal and Zoubin Ghahramani.
\newblock Dropout as a {B}ayesian approximation: {R}epresenting model uncertainty in deep learning.
\newblock In \emph{International Conference on Machine Learning (ICML)}, 2016.

\bibitem[Garipov et~al.(2018)Garipov, Izmailov, Podoprikhin, Vetrov, and Wilson]{garipov2018loss}
Timur Garipov, Pavel Izmailov, Dmitrii Podoprikhin, Dmitry~P Vetrov, and Andrew~G Wilson.
\newblock Loss surfaces, mode connectivity, and fast ensembling of {DNN}s.
\newblock In \emph{Advances in Neural Processing Systems (NeurIPS)}, volume~31, 2018.

\bibitem[Gastpar et~al.(2024)Gastpar, Nachum, Shafer, and Weinberger]{gastpar2024fantastic}
Michael Gastpar, Ido Nachum, Jonathan Shafer, and Thomas Weinberger.
\newblock Fantastic generalization measures are nowhere to be found.
\newblock In \emph{International Conference on Learning Representations (ICLR)}, 2024.

\bibitem[Germain et~al.(2009)Germain, Lacasse, Laviolette, and Marchand]{germain2009pac}
Pascal Germain, Alexandre Lacasse, Fran{\c{c}}ois Laviolette, and Mario Marchand.
\newblock {PAC-B}ayesian learning of linear classifiers.
\newblock In \emph{International Conference on Machine Learning (ICML)}, volume~26, 2009.

\bibitem[Germain et~al.(2015)Germain, Lacasse, Laviolette, March, and Roy]{germain2015riskboundsmajvote}
Pascal Germain, Alexandre Lacasse, Francois Laviolette, Mario March, and Jean-Francis Roy.
\newblock Risk bounds for the majority vote: From a {PAC-B}ayesian analysis to a learning algorithm.
\newblock \emph{Journal of Machine Learning Research}, 16\penalty0 (26):\penalty0 787--860, 2015.

\bibitem[Goodfellow et~al.(2016)Goodfellow, Bengio, and Courville]{goodfellow2016deep}
Ian Goodfellow, Yoshua Bengio, and Aaron Courville.
\newblock \emph{Deep Learning}.
\newblock MIT press, 2016.

\bibitem[Gr{\"u}nwald(2012)]{grunwald2012safe}
Peter Gr{\"u}nwald.
\newblock The safe {B}ayesian: Learning the learning rate via the mixability gap.
\newblock In \emph{International Conference on Algorithmic Learning Theory}, 2012.

\bibitem[Gustafsson et~al.(2020)Gustafsson, Danelljan, and Schon]{gustafsson2020evaluating}
Fredrik~K. Gustafsson, Martin Danelljan, and Thomas~B Schon.
\newblock Evaluating scalable bayesian deep learning methods for robust computer vision.
\newblock In \emph{Conference on Computer Vision and Pattern Recognition Workshops (CVPRW)}, pp.\  318--319, 2020.

\bibitem[Hansen \& Salamon(1990)Hansen and Salamon]{hansen1990}
Lars~Kai Hansen and Peter Salamon.
\newblock Neural network ensembles.
\newblock \emph{IEEE Transactions on Pattern Analysis and Machine Intelligence}, 12\penalty0 (10):\penalty0 993--1001, 1990.

\bibitem[He et~al.(2016)He, Zhang, Ren, and Sun]{he2016deep}
Kaiming He, Xiangyu Zhang, Shaoqing Ren, and Jian Sun.
\newblock Deep residual learning for image recognition.
\newblock In \emph{IEEE/CVF Conference on Computer Vision and Pattern Recognition (CVPR)}, 2016.

\bibitem[Huang et~al.(2017)Huang, Li, Pleiss, Liu, Hopcroft, and Weinberger]{huang2017snapshot}
Gao Huang, Yixuan Li, Geoff Pleiss, Zhuang Liu, John~E. Hopcroft, and Kilian~Q. Weinberger.
\newblock Snapshot ensembles: Train 1, get m for free.
\newblock In \emph{International Conference on Learning Representations (ICLR)}, 2017.

\bibitem[Igel \& H\"usken(2003)Igel and H\"usken]{igel:01e}
Christian Igel and Michael H\"usken.
\newblock Empirical evaluation of the improved {R}prop learning algorithm.
\newblock \emph{Neurocomputing}, 50\penalty0 (C):\penalty0 105--123, 2003.

\bibitem[Izmailov et~al.(2021)Izmailov, Vikram, Hoffman, and Wilson]{izmailov2021bayesian}
Pavel Izmailov, Sharad Vikram, Matthew~D Hoffman, and Andrew~Gordon Wilson.
\newblock What are {Bayesian} neural network posteriors really like?
\newblock In \emph{International Conference on Machine Learning (ICML)}, volume~38, 2021.

\bibitem[Jiang et~al.(2017)Jiang, Liu, Fu, and Wu]{jiang2017decomposition}
Zhengshen Jiang, Hongzhi Liu, Bin Fu, and Zhonghai Wu.
\newblock Generalized ambiguity decompositions for classification with applications in active learning and unsupervised ensemble pruning.
\newblock In \emph{AAAI Conference on Artificial Intelligence}, volume~31, 2017.

\bibitem[Kapoor et~al.(2022)Kapoor, Maddox, Izmailov, and Wilson]{kapoor2022}
Sanyam Kapoor, Wesley~J Maddox, Pavel Izmailov, and Andrew~G Wilson.
\newblock On uncertainty, tempering, and data augmentation in {Bayesian} classification.
\newblock In \emph{Advances in Neural Processing Systems (NeurIPS)}, volume~35, 2022.

\bibitem[Kendall \& Gal(2017)Kendall and Gal]{kendall2017uncertainties}
Alex Kendall and Yarin Gal.
\newblock What uncertainties do we need in {B}ayesian deep learning for computer vision?
\newblock In \emph{Advances in Neural Processing Systems (NeurIPS)}, volume~30, 2017.

\bibitem[Kleijn \& Van~der Vaart(2012)Kleijn and Van~der Vaart]{kleijn2012bernstein}
Bas~JK Kleijn and Aad~W Van~der Vaart.
\newblock The {Bernstein-von-Mises} theorem under misspecification.
\newblock \emph{Electronic Journal of Statistics}, 6:\penalty0 354–381, 2012.

\bibitem[Krizhevsky(2009)]{alex2009cifar}
Alex Krizhevsky.
\newblock Learning multiple layers of features from tiny images.
\newblock \url{https://www.cs.toronto.edu/~kriz/learning-features-2009-TR.pdf}, 2009.
\newblock Accessed: 15.04.2024.

\bibitem[Krogh \& Sollich(1997)Krogh and Sollich]{krogh1997statistical}
Anders Krogh and Peter Sollich.
\newblock Statistical mechanics of ensemble learning.
\newblock \emph{Physical Review E}, 55\penalty0 (1):\penalty0 811, 1997.

\bibitem[Lacasse et~al.(2006)Lacasse, Laviolette, Marchand, Germain, and Usunier]{lacasse2006pac}
Alexandre Lacasse, Fran{\c{c}}ois Laviolette, Mario Marchand, Pascal Germain, and Nicolas Usunier.
\newblock {PAC-Bayes} bounds for the risk of the majority vote and the variance of the gibbs classifier.
\newblock In \emph{Advances in Neural Processing Systems (NeurIPS)}, volume~19, 2006.

\bibitem[Lakshminarayanan et~al.(2017)Lakshminarayanan, Pritzel, and Blundell]{lakshminarayanan2017simple}
Balaji Lakshminarayanan, Alexander Pritzel, and Charles Blundell.
\newblock Simple and scalable predictive uncertainty estimation using deep ensembles.
\newblock In \emph{Advances in Neural Processing Systems (NeurIPS)}, volume~30, 2017.

\bibitem[Langford \& Shawe-Taylor(2002)Langford and Shawe-Taylor]{langford2002pac}
John Langford and John Shawe-Taylor.
\newblock {PAC-B}ayes \& margins.
\newblock In \emph{Advances in Neural Processing Systems (NeurIPS)}, volume~15, 2002.

\bibitem[Laplace(1820)]{laplace1820theorie}
Pierre~Simon Laplace.
\newblock \emph{Th{\'e}orie analytique des probabilit{\'e}s}, volume~7.
\newblock Courcier, 1820.
\newblock In French, cited after \cite{kleijn2012bernstein}.

\bibitem[Le~Cam(1953)]{lecam1953bernsteinvonmisesproof}
Lucien Le~Cam.
\newblock On some asymptotic properties of maximum likelihood estimates and related {B}ayes' estimates.
\newblock \emph{Univ. Calif. Publ. in Statist.}, 1:\penalty0 277--330, 1953.

\bibitem[Lee et~al.(2015)Lee, Purushwalkam, Cogswell, Crandall, and Batra]{lee2015m}
Stefan Lee, Senthil Purushwalkam, Michael Cogswell, David Crandall, and Dhruv Batra.
\newblock Why m heads are better than one: Training a diverse ensemble of deep networks.
\newblock \emph{arXiv preprint arXiv:1511.06314}, 2015.

\bibitem[Li et~al.(2016)Li, Chen, Carlson, and Carin]{li2016preconditioner}
Chunyuan Li, Changyou Chen, David Carlson, and Lawrence Carin.
\newblock Preconditioned stochastic gradient {L}angevin dynamics for deep neural networks.
\newblock In \emph{Association for the Advancement of Artificial Intelligence (AAAI) Conference on Artificial Intelligence}, volume~30, 2016.

\bibitem[Liu \& Yao(1999)Liu and Yao]{liu1999negativecorrelation}
Yong Liu and Xin Yao.
\newblock Ensemble learning via negative correlation.
\newblock \emph{Neural Networks}, 12\penalty0 (10):\penalty0 1399--1404, 1999.

\bibitem[Lorenzen et~al.(2019)Lorenzen, Igel, and Seldin]{lorenzen2019pac}
Stephan~S Lorenzen, Christian Igel, and Yevgeny Seldin.
\newblock On {PAC-B}ayesian bounds for random forests.
\newblock \emph{Machine Learning}, 108\penalty0 (8):\penalty0 1503--1522, 2019.

\bibitem[Ma et~al.(2015)Ma, Chen, and Fox]{ma2015preconditioner}
Yi-An Ma, Tianqi Chen, and Emily Fox.
\newblock A complete recipe for stochastic gradient mcmc.
\newblock In \emph{Advances in Neural Processing Systems (NeurIPS)}, volume~28, 2015.

\bibitem[Maas et~al.(2011)Maas, Daly, Pham, Huang, Ng, and Potts]{imdb}
Andrew~L. Maas, Raymond~E. Daly, Peter~T. Pham, Dan Huang, Andrew~Y. Ng, and Christopher Potts.
\newblock Learning word vectors for sentiment analysis.
\newblock In \emph{Annual Meeting of the Association for Computational Linguistics: Human Language Technologies}, volume~49, 2011.

\bibitem[MacKay(1992{\natexlab{a}})]{mackay1992bayesian}
David~JC MacKay.
\newblock Bayesian interpolation.
\newblock \emph{Neural Computation}, 4\penalty0 (3):\penalty0 415--447, 1992{\natexlab{a}}.

\bibitem[MacKay(1992{\natexlab{b}})]{mackay1992practical}
David~JC MacKay.
\newblock A practical {B}ayesian framework for backpropagation networks.
\newblock \emph{Neural Computation}, 4\penalty0 (3):\penalty0 448--472, 1992{\natexlab{b}}.

\bibitem[Maddox et~al.(2019)Maddox, Izmailov, Garipov, Vetrov, and Wilson]{maddox2019simple}
Wesley~J Maddox, Pavel Izmailov, Timur Garipov, Dmitry~P Vetrov, and Andrew~Gordon Wilson.
\newblock A simple baseline for {B}ayesian uncertainty in deep learning.
\newblock In \emph{Advances in Neural Processing Systems (NeurIPS)}, volume~32, 2019.

\bibitem[Masegosa(2020)]{masegosa2020learning}
Andrés~R. Masegosa.
\newblock Learning under model misspecification: Applications to variational and ensemble methods.
\newblock In \emph{Advances in Neural Processing Systems (NeurIPS)}, volume~33, 2020.

\bibitem[Masegosa et~al.(2020)Masegosa, Lorenzen, Igel, and Seldin]{masegosa2020second}
Andrés~R. Masegosa, Stephan~S. Lorenzen, Christian Igel, and Yevgeny Seldin.
\newblock Second order {PAC-Bayesian} bounds for the weighted majority vote.
\newblock In \emph{Advances in Neural Processing Systems (NeurIPS)}, 2020.

\bibitem[McAllester(1998)]{mcallester1998pacbayesian}
David~A McAllester.
\newblock Some {PAC-B}ayesian theorems.
\newblock In \emph{Conference on Computational Learning Theory (COLT)}, volume~11, 1998.

\bibitem[Minka(2002)]{minka2000bayesian}
Thomas~P Minka.
\newblock Bayesian model averaging is not model combination, 2002.
\newblock Available electronically at https://tminka.github.io/papers/minka-bma-isnt-mc.pdf.

\bibitem[Monteith et~al.(2011)Monteith, Carroll, Seppi, and Martinez]{monteith2011turning}
Kristine Monteith, James~L Carroll, Kevin Seppi, and Tony Martinez.
\newblock Turning {B}ayesian model averaging into {B}ayesian model combination.
\newblock In \emph{International Joint Conference on Neural Networks (IJCNN)}, pp.\  2657--2663, 2011.

\bibitem[Nabarro et~al.(2022)Nabarro, Ganev, Garriga-Alonso, Fortuin, van~der Wilk, and Aitchison]{nabarro2021data}
Seth Nabarro, Stoil Ganev, Adri\`a Garriga-Alonso, Vincent Fortuin, Mark van~der Wilk, and Laurence Aitchison.
\newblock Data augmentation in {Bayesian} neural networks and the cold posterior effect.
\newblock In \emph{Conference on Uncertainty in Artificial Intelligence}, 2022.

\bibitem[Nado et~al.(2021)Nado, Band, Collier, Djolonga, Dusenberry, Farquhar, Filos, Havasi, Jenatton, Jerfel, Liu, Mariet, Nixon, Padhy, Ren, Rudner, Wen, Wenzel, Murphy, Sculley, Lakshminarayanan, Snoek, Gal, and Tran]{nado2021uncertainty}
Zachary Nado, Neil Band, Mark Collier, Josip Djolonga, Michael Dusenberry, Sebastian Farquhar, Angelos Filos, Marton Havasi, Rodolphe Jenatton, Ghassen Jerfel, Jeremiah Liu, Zelda Mariet, Jeremy Nixon, Shreyas Padhy, Jie Ren, Tim Rudner, Yeming Wen, Florian Wenzel, Kevin Murphy, D.~Sculley, Balaji Lakshminarayanan, Jasper Snoek, Yarin Gal, and Dustin Tran.
\newblock {Uncertainty Baselines}: {B}enchmarks for uncertainty \& robustness in deep learning.
\newblock \emph{arXiv preprint arXiv:2106.04015}, 2021.

\bibitem[Neal(1992)]{neal1992hmc}
Radford~M Neal.
\newblock Bayesian learning via stochastic dynamics.
\newblock In \emph{Advances in Neural Processing Systems (NeurIPS)}, volume~5, 1992.

\bibitem[Neal(1996)]{neal1995bayesian}
Radford~M Neal.
\newblock \emph{Bayesian Learning for Neural Networks}.
\newblock Springer, 1996.

\bibitem[Neal(2011)]{neal2011mcmc}
Radford~M Neal.
\newblock {MCMC} using {H}amiltonian dynamics.
\newblock \emph{Handbook of Markov Chain Monte Carlo}, 2\penalty0 (11):\penalty0 113--162, 2011.

\bibitem[Ortega et~al.(2022)Ortega, Caba\~nas, and Masegosa]{ortega2022diversity}
Luis~A. Ortega, Rafael Caba\~nas, and Andres Masegosa.
\newblock Diversity and generalization in neural network ensembles.
\newblock In \emph{International Conference on Artificial Intelligence and Statistics (AISTATS)}, volume~25, pp.\  11720--11743, 2022.

\bibitem[Ovadia et~al.(2019)Ovadia, Fertig, Ren, Nado, Sculley, Nowozin, Dillon, Lakshminarayanan, and Snoek]{ovadia2019trust}
Yaniv Ovadia, Emily Fertig, Jie Ren, Zachary Nado, D.~Sculley, Sebastian Nowozin, Joshua Dillon, Balaji Lakshminarayanan, and Jasper Snoek.
\newblock Can you trust your model\textquotesingle s uncertainty? {E}valuating predictive uncertainty under dataset shift.
\newblock In \emph{Advances in Neural Processing Systems (NeurIPS)}, volume~32, 2019.

\bibitem[Pearce et~al.(2020)Pearce, Leibfried, and Brintrup]{pearce2020uncertainty}
Tim Pearce, Felix Leibfried, and Alexandra Brintrup.
\newblock Uncertainty in neural networks: Approximately {B}ayesian ensembling.
\newblock In \emph{International Conference on Artificial Intelligence and Statistics}, 2020.

\bibitem[P{\'e}rez-Ortiz et~al.(2021)P{\'e}rez-Ortiz, Rivasplata, Shawe-Taylor, and Szepesv{\'a}ri]{perez2021tighter}
Mar{\'\i}a P{\'e}rez-Ortiz, Omar Rivasplata, John Shawe-Taylor, and Csaba Szepesv{\'a}ri.
\newblock Tighter risk certificates for neural networks.
\newblock \emph{Journal of Machine Learning Research}, 22\penalty0 (227):\penalty0 1--40, 2021.

\bibitem[Perrone \& Cooper(1993)Perrone and Cooper]{perrone93}
Michael~P Perrone and Leon~N Cooper.
\newblock When networks disagree: Ensemble methods for hybrid neural networks.
\newblock In Richard~J. Mammone (ed.), \emph{Neural networks for speech and image processing}, pp.\  81--99. Chapman-Hall, 1993.

\bibitem[Ritter et~al.(2018)Ritter, Botev, and Barber]{ritter2018scalable}
Hippolyt Ritter, Aleksandar Botev, and David Barber.
\newblock A scalable {L}aplace approximation for neural networks.
\newblock In \emph{International Conference on Learning Representations (ICLR)}, volume~6, 2018.

\bibitem[Rudner et~al.(2022)Rudner, Chen, Teh, and Gal]{rudner2022tractable}
Tim~GJ Rudner, Zonghao Chen, Yee~Whye Teh, and Yarin Gal.
\newblock Tractable function-space variational inference in {B}ayesian neural networks.
\newblock In \emph{Advances in Neural Processing Systems (NeurIPS)}, volume~35, 2022.

\bibitem[Sollich \& Krogh(1995)Sollich and Krogh]{sollich1995learning}
Peter Sollich and Anders Krogh.
\newblock Learning with ensembles: How overfitting can be useful.
\newblock In \emph{Advances in Neural Information Processing Systems (NeurIPS)}, volume~8, 1995.

\bibitem[Thiemann et~al.(2017)Thiemann, Igel, Wintenberger, and Seldin]{thiemann2017strongly}
Niklas Thiemann, Christian Igel, Olivier Wintenberger, and Yevgeny Seldin.
\newblock A strongly quasiconvex {PAC-B}ayesian bound.
\newblock In \emph{International Conference on Algorithmic Learning Theory}, 2017.

\bibitem[Tolstikhin \& Seldin(2013)Tolstikhin and Seldin]{tolstikhin2013pac}
Ilya~O Tolstikhin and Yevgeny Seldin.
\newblock {PAC-Bayes-empirical-Bernstein} inequality.
\newblock In \emph{Advances in Neural Processing Systems (NeurIPS)}, volume~26, 2013.

\bibitem[Vadera et~al.(2022)Vadera, Li, Cobb, Jalaian, Abdelzaher, and Marlin]{vadera2022ursabench}
Meet Vadera, Jinyang Li, Adam Cobb, Brian Jalaian, Tarek Abdelzaher, and Benjamin Marlin.
\newblock {URSAB}ench: A system for comprehensive benchmarking of {B}ayesian deep neural network models and inference methods.
\newblock In \emph{Machine Learning and Systems (MLSys)}, volume~4, 2022.

\bibitem[Valiant(1984)]{valiant1984pac}
Leslie~G Valiant.
\newblock A theory of the learnable.
\newblock \emph{Communications of the ACM}, 27\penalty0 (11):\penalty0 1134--1142, 1984.

\bibitem[Van~der Vaart(2000)]{van2000asymptotic}
Aad~W Van~der Vaart.
\newblock \emph{Asymptotic Statistics}, volume~3.
\newblock Cambridge University Press, 2000.

\bibitem[{Von Mises}(1931)]{vonmises1931}
Richard {Von Mises}.
\newblock \emph{Wahrscheinlichkeitsrechnung}.
\newblock Springer, 1931.

\bibitem[Welling \& Teh(2011)Welling and Teh]{welling2011bayesian}
Max Welling and Yee~W Teh.
\newblock Bayesian learning via stochastic gradient {L}angevin dynamics.
\newblock In \emph{International Conference on Machine Learning (ICML)}, volume~28, 2011.

\bibitem[Wenzel et~al.(2020{\natexlab{a}})Wenzel, Roth, Veeling, Swiatkowski, Tran, Mandt, Snoek, Salimans, Jenatton, and Nowozin]{wenzel_2020_how}
Florian Wenzel, Kevin Roth, Bastiaan~S. Veeling, Jakub Swiatkowski, Linh Tran, Stephan Mandt, Jasper Snoek, Tim Salimans, Rodolphe Jenatton, and Sebastian Nowozin.
\newblock How good is the {Bayes} posterior in deep neural networks really?
\newblock In \emph{International Conference on Machine Learning (ICML)}, volume~37, 2020{\natexlab{a}}.

\bibitem[Wenzel et~al.(2020{\natexlab{b}})Wenzel, Snoek, Tran, and Jenatton]{wenzel2020hyperparameter}
Florian Wenzel, Jasper Snoek, Dustin Tran, and Rodolphe Jenatton.
\newblock Hyperparameter ensembles for robustness and uncertainty quantification.
\newblock In \emph{Advances in Neural Processing Systems (NeurIPS)}, volume~33, 2020{\natexlab{b}}.

\bibitem[Wiese et~al.(2023)Wiese, Wimmer, Papamarkou, Bischl, G{\"u}nnemann, and R{\"u}gamer]{wiese2023efficient}
Jonas~Gregor Wiese, Lisa Wimmer, Theodore Papamarkou, Bernd Bischl, Stephan G{\"u}nnemann, and David R{\"u}gamer.
\newblock Towards efficient {MCMC} sampling in {Bayesian} neural networks by exploiting symmetry.
\newblock In \emph{Machine Learning and Knowledge Discovery in Databases: Research Track}, 2023.

\bibitem[Wilson \& Izmailov(2020)Wilson and Izmailov]{wilson2020bayesian}
Andrew~G Wilson and Pavel Izmailov.
\newblock Bayesian deep learning and a probabilistic perspective of generalization.
\newblock In \emph{Advances in Neural Processing Systems (NeurIPS)}, volume~33, pp.\  4697--4708, 2020.

\bibitem[Wolpert(1992)]{wolpert_stacked_1992}
David~H. Wolpert.
\newblock Stacked generalization.
\newblock \emph{Neural Networks}, 5\penalty0 (2):\penalty0 241--259, January 1992.

\bibitem[Wu et~al.(2021)Wu, Masegosa, Lorenzen, Igel, and Seldin]{wu:21}
Yi-Shan Wu, Andrés~R. Masegosa, Stephan~S. Lorenzen, Christian Igel, and Yevgeny Seldin.
\newblock {Chebyshev-Cantelli PAC-Bayes-Bennett} inequality for the weighted majority vote.
\newblock In \emph{Advances in Neural Processing Systems (NeurIPS)}, 2021.

\bibitem[Yenter \& Verma(2017)Yenter and Verma]{yenter2017cnnlstm}
Alec Yenter and Abhishek Verma.
\newblock Deep {CNN-LSTM} with combined kernels from multiple branches for {IMDb} review sentiment analysis.
\newblock In \emph{IEEE Annual Ubiquitous Computing, Electronics and Mobile Communication Conference (UEMCON)}, volume~8, 2017.

\bibitem[Zagoruyko \& Komodakis(2016)Zagoruyko and Komodakis]{Zagoruyko2016WRN}
Sergey Zagoruyko and Nikos Komodakis.
\newblock Wide residual networks.
\newblock In \emph{{British Machine Vision Conference (BMVC)}}, 2016.

\bibitem[Zhang et~al.(2020)Zhang, Li, Zhang, Chen, and Wilson]{zhang2019cyclical}
Ruqi Zhang, Chunyuan Li, Jianyi Zhang, Changyou Chen, and Andrew~Gordon Wilson.
\newblock Cyclical stochastic gradient {MCMC} for {B}ayesian deep learning.
\newblock In \emph{International Conference on Learning Representations (ICLR)}, 2020.

\end{thebibliography}
}



\clearpage
\appendix

\renewcommand\thefigure{\thesection.\arabic{figure}} 
\renewcommand\thetable{\thesection.\arabic{table}} 
\renewcommand\theequation{\thesection.\arabic{equation}} 

\section{Appendix / supplemental material}

\subsection{Experimental details}
The experimental setups including the hyperparameters were taken from the respective references cited and the corresponding source code. 
Thus, no hyperparameter tuning was performed, except for the EyePACS dataset, where the hyperparameters from the literature did not lead to convergence of the models and the learning rate was decreased by a factor of 100. 

\paragraph{IMDB.} For IMDB, the described setup from \citet{wenzel_2020_how} was copied as closely as possible. The dataset from the \texttt{tensorflow.keras.datasets} API was used with 20,000 word features and a maximum sequence length of 100. 20,000 training samples with 5,000 random validation samples being used for training with early-stopping due to overfitting. The test set included all 25,000 samples from the original test set. The CNN LSTM model and code were taken from the Keras example\footnote{\scriptsize\href{https://github.com/keras-team/keras/blob/1a3ee8441933fc007be6b2beb47af67998d50737/examples/imdb_cnn_lstm.py}{https://github.com/keras-team/keras/blob/1a3ee8441933fc007be6b2beb47af67998d50737/examples/imdb\_cnn\_lstm.py}} and extended with a Gaussian prior $\mathcal{N}(0, I)$ for regularization, as done by the authors in their code repository\footnote{\scriptsize\href{https://github.com/google-research/google-research/tree/master/cold_posterior_bnn}{https://github.com/google-research/google-research/tree/master/cold\_posterior\_bnn}}. As optimizer, SGD with Nesterov momentum of 0.98 and a batch size of 32 with a constant learning rate was utilized. For the cyclic learning rate in all snapshot ensemble (SSE) experiments, the original schedule from \citet{huang2017snapshot} was used.

\paragraph{CIFAR.} The CIFAR datasets were taken from the \texttt{tensorflow.keras.datasets} API with the original train and test split. Similar to \citet{wenzel_2020_how}, no early-stopping with a validation set was used. The ResNet20 model was taken from the Keras example \footnote{\scriptsize\href{https://github.com/keras-team/keras/blob/1a3ee8441933fc007be6b2beb47af67998d50737/examples/cifar10_resnet.py}{https://github.com/keras-team/keras/blob/1a3ee8441933fc007be6b2beb47af67998d50737/examples/cifar10\_resnet.py}}. As in the example, all experiments used data augmentation during training with random left/right flipping and random cropping with $4px$ of shift horizontally and vertically. The hyperparameters for the ResNet20 were adopted from \citet{wenzel_2020_how}, while they were taken from \citet{ashukha2021pitfalls} for the ResNet110 and Wide ResNet28-10. As optimizer, SGD with Nesterov momentum of 0.9 and a batch size of 128 with a step-wise decreasing learning rate ($\eta$) was employed (epoch, $\eta$-multiplier): (80, 0.1), (120, 0.01), (160, 0.001), (180, 0.0005). For the ResNet110 and WRN28-10, a linearly decreasing learning rate schedule starting at half of the total number of epochs was utilized, as reported by \citet{ashukha2021pitfalls} and introduced by \citet{garipov2018loss}:
\begin{align}
    \eta(i)=\begin{cases}
      \eta_{\mathrm{init}}, &  i \in [0, 0.5 \cdot \mathrm{epochs}]\\
      \eta_{\mathrm{init}} \cdot (1.0 - 0.99 \cdot (i/\mathrm{epochs} - 0.5)/0.4), &  i \in [0.5 \cdot \mathrm{epochs}, 0.9 \cdot \mathrm{epochs}]\\
      \eta_{\mathrm{init}} \cdot 0.01, &  \mathrm{otherwise}\\
    \end{cases}
\end{align}

\begin{table}
    \caption{Hyperparameters for all experiments. \ab{Ep.b.} (Epoch budget) refers to the experiments in section~\ref{app:training_time}.}
    \label{app:hyperparams}
    \centering
    \small
    \begin{tabular}{@{}l l l l l l l l@{}}
        \toprule
         Model & Experiment & Val. set & $lr_{\mathrm{init}}$ & CP & epochs & L2-reg. & LR Sched. \\
         (Dataset) & & & & & & & \\
        \midrule
         \multirow{5}{*}{\shortstack[l]{IMDB\\(CNN LSTM)}} & \Original & 20\% & \multirow{5}{*}{0.1} & 1 & 50 & \multirow{5}{*}{$\mathcal{N}(0, I)$} & constant \\
         & Checkp. & 20\% & & 5 & 5 & & constant \\
         & Bagging & Bagging & & 1 & 50 & & constant \\
         & SSE & 20\% & & 5 & 50 & & cyclic \\
         & Ep.b. & 20\% & & 1 & variable & & constant \\
        \midrule
         \multirow{4}{*}{\shortstack[l]{ResNet20\\(CIFAR-10)}} & \Original & 0\% & 0.1 & 5 & 200 & \multirow{4}{*}{0.002} & step \\
         & Bagging & Bagging & 0.1 & 5 & 200 & & step \\
         & SSE & 0\% & 0.2 & 5 & 200 & & cyclic \\
         & Ep.b. & 0\% & 0.1 & 1 & variable & & step \\
        \midrule
         \multirow{4}{*}{\shortstack[l]{ResNet110,\\WRN28-10\\(CIFAR-10,\\CIFAR-100)}} & \Original & 0\% & \multirow{4}{*}{0.1} & 5 & 300 & \multirow{4}{*}{0.0003} & linear \\
         & Bagging & Bagging & & 5 & 300 & & linear \\
         & SSE & 0\% & & 5 & 300 & & cyclic \\
         & Ep.b. & 0\% & & 1 & variable & & linear \\
        \midrule
         \multirow{2}{*}{\shortstack[l]{ResNet50\\(EyePACS)}} & \Original & 0\% & \multirow{2}{*}{$2.3\cdot 10^{-4}$} & \multirow{2}{*}{6} & \multirow{2}{*}{90} & \multirow{2}{*}{$1.07\cdot 10^{-4}$} & \multirow{2}{*}{step} \\
         & Bagging & Bagging & & & & & \\
        \bottomrule
    \end{tabular}
\end{table}

\paragraph{EyePACS.} The EyePACS dataset was included in the 2015 Kaggle diabetic retinopathy detection competition \citep{eyepacs_website}. Diabetic retinopathy is the leading cause of blindness in the working-age population of the developed world and is estimated to affect more than 92 million people. The dataset contains high-resolution labeled RGB images of human retinas with varying degrees of diabetic retinopathy on a five-grade scale, from none (0), to mild (1), moderate (2), severe (3) and proliferating (4) development of the disease. It consists of 35126 training, 10906 validation and 42670 test images, each labeled by a medical expert. In order to binarize the labels, \citet{band2021benchmarking} follow previous work and classify moderate or worse manifestation as sight-threatening (grades 2-4), and remaining grades 0-1 as non sight-threatening. The dataset with binary labels is unbalanced, such that 19.6\% of the training and 19.2\% of the test set have a positive label, which is why the cross-entropy objective is weighted by the inverse of the global class distribution. Furthermore, the images exhibit different types of noise (artifacts, focus, exposure) and are expected to show label noise due to misjudgement of the medical personnel \citep{eyepacs_website}. The preprocessing of images by \citet{band2021benchmarking} follows the winning entry of the original Kaggle challenge \citep{eyepacs_website}. The images are first rescaled such that the retinas have a radius of 300 pixels, smoothed using local Gaussian blur with a kernel standard deviation of 100 pixels and clipped to 90\% to remove boundary effects. Finally, they are resized to 512x512 pixels and stored. For our preprocessing, the implementation from \citet{nado2021uncertainty} was used\footnote{\scriptsize\href{https://github.com/google/uncertainty-baselines/blob/main/uncertainty_baselines/datasets/diabetic_retinopathy_dataset_utils.py}{https://github.com/google/uncertainty-baselines/blob/main/uncertainty\_baselines/datasets/diabetic\_retinopathy\_dataset\_utils.py}}. Except the preprocessing, no data augmentation was used. With the optimal SGD hyperparameters from \citet{band2021benchmarking}, the resulting models failed to converge in our case, which is why we reduced the learning rate and switched the optimizer to Adam. The batch size was kept at 32. For the learning rate schedule, a step-wise decrease was employed with a reduction by a factor of $0.2$ at 30 and 60 epochs. 


\subsection{Test-time cross-validation}\label{app:ttcv}

\citet{ashukha2021pitfalls} describe the problem of requiring a validation set (in their case for scaling the logit outputs of neural networks with a temperature parameter), while the benchmark datasets only feature a training and test dataset (e.g. CIFAR). In that case, when splitting the training set into a training and validation subset, the performance may drop compared to methods that use the full training data because the reduced training data set is a worse description of the task. In contrast, when splitting off a validation set from the test data, one still obtains an unbiased estimate of the generalization error, but with higher variance due to smaller test dataset size. In order to reduce the variance, \citet{ashukha2021pitfalls} propose test-time cross-validation: splitting the test dataset randomly and averaging the results of the generalization estimates to reduce the variance. We follow this approach and divide the test dataset randomly in half. One half is used for the tandem loss bound optimization, while the other serves as a generalization estimate. Afterwards, the roles of the datasets are switched and the risk estimates are averaged.
In our case, this setting does not result in a fair comparison. All methods should make use of the same amount of data, and algorithms that do not need a validation set could use the additional data for training, which can improve performance, in particular if data are scarce.
However, one can envision a scenario where the additional data are not available during training but only later when deploying the model
(e.g., local fine-tuning of centrally trained models).

\subsection{Cancellation of independent binary classification errors}\label{sec:hoeffding}
\newcommand{\rv}{A}
\newcommand{\hmv}{h_{\text{MV}}}
\newcommand{\rmv}{\rv_{\text{MV}}}
Because this known result is often stated without proof, we provide an upper bound for the error probability of the majority vote for an ensemble of binary classifiers with independent errors.
\begin{theorem}
    Consider $\nem$ binary classifiers $h_i,\dots, h_{\nem}$ mapping to $\{0,1\}$ and the majority vote classifier given by:
\begin{equation}
    \hmv(x) = \begin{cases}
        1 & \text{if } \sum_{i=1}^{\nem} h_i(x) \ge \nem/2\\
        0 & \text{otherwise}
    \end{cases}
\end{equation}
We define the random variables
$\rv_i = \ind[h_i(x) \neq y] $ indicating a mistake by $h_i$ 
and assume for all $i=1,\dots,\nem$ that the risk is bounded by a constant 
\begin{equation}
\pr_{(x,y)\sim\ p}(h_i(x) \neq y) = \E\{ \rv_i\} \le \emax < \frac{1}{2}
\end{equation}
and that the $\rv_i$ are independent. Then it holds
\begin{equation}
\pr(\hmv(x)  \neq y)\le  \exp\left(-\frac{2\big(\frac{\nem+1}{2}-\nem\emax\big)^2}{\nem}\right) \enspace.
\end{equation}
\end{theorem}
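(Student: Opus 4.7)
The plan is a direct Hoeffding concentration argument applied to the sum of error indicators $S = \sum_{i=1}^{M} A_i$. The argument has two steps: a combinatorial reduction of the majority-vote error event to a deviation event of the form $\{S \ge (M+1)/2\}$, and then a standard Hoeffding tail bound on that event using the independence and $[0,1]$-boundedness of the $A_i$.

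For the reduction, write $C_i = h_i(x) \in \{0,1\}$ so that $A_i = \ind[C_i \ne y]$. If $y = 0$, then $A_i = C_i$ and $\hmv$ errs iff $\sum_i C_i \ge M/2$, i.e.\ $S \ge M/2$. If $y = 1$, then $A_i = 1 - C_i$ and $\hmv$ errs iff $\sum_i C_i < M/2$, i.e.\ $S > M/2$. In both cases the error event is contained in $\{S \ge M/2\}$, and since $S$ is integer-valued this equals $\{S \ge \lceil M/2 \rceil\}$; for odd $M$ (the natural choice to avoid ties) this is exactly $\{S \ge (M+1)/2\}$.

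For the concentration step, the $A_i$ are independent and lie in $[0,1]$, and $\E[S] = \sum_i \E[A_i] \le M\emax < M/2$, so Hoeffding's inequality applies with any threshold $t \ge \E[S]$:
\begin{equation*}
\pr\!\bigl(S \ge t\bigr) \;\le\; \exp\!\left(-\frac{2\bigl(t - \E[S]\bigr)^2}{M}\right).
\end{equation*}
Choosing $t = (M+1)/2$ and lower-bounding $t - \E[S] \ge (M+1)/2 - M\emax > 0$ inside the square yields the claimed inequality.

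The main obstacle is the reduction in the first step, which couples the tie-breaking convention $(\sum_i C_i \ge M/2 \Rightarrow 1)$ with both the parity of $M$ and the value of $y$; once one fixes odd $M$, or equivalently adopts a strict-majority convention so that ties cannot occur, the containment ``$\hmv$ errs $\Rightarrow S \ge (M+1)/2$'' is immediate and the remainder is a textbook Hoeffding calculation. The independence assumption on the $A_i$ is precisely what produces the $1/M$ rate in the exponent and encodes the cancellation-of-errors intuition described in Section~\ref{sec:cancel}.
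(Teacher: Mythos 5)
Your proposal is correct and follows essentially the same route as the paper: reduce the majority-vote error event to $\{S_{\nem} \ge (\nem+1)/2\}$, bound $\E[S_{\nem}] \le \nem\emax$, and apply Hoeffding's inequality with $\varepsilon = (\nem+1)/2 - \nem\emax$. Your discussion of the tie-breaking convention and the parity of $\nem$ is in fact more careful than the paper's own proof, which silently identifies the error event with $\{S_{\nem} > \nem/2\}$ and does not address the even-$\nem$ tie case.
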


\begin{proof}
Let
 $S_{\nem} = \sum_{i=1}^{\nem} \rv_i$.
The probability $\pr(\hmv(x)  \neq y)$ that the majority vote classifier makes a mistake 
is the probability  that  $\pr(S_{\nem} > \nem/2)$ or alternatively $\pr(S_{\nem} \ge  (\nem+1)/2 )$.
It holds
\begin{equation}
    \E\bigg\{\sum_{i=1}^{\nem} \rv_i\bigg\} = \E\{S_{\nem}\} \le \nem\emax
\end{equation}
and we have 
\begin{align}
    \pr(S_{\nem} \ge  (\nem+1)/2 ) &= \pr(S_{\nem} - \E\{S_{\nem}\} \ge  (\nem+1)/2 - \E\{S_{\nem}\} )\\ 
    & \le P(S_{\nem} - \E\{S_{\nem}\} \ge  (\nem+1)/2 - \nem\emax  ).
\end{align}
With 
Hoeffding's inequality and $\varepsilon =(\nem+1)/2-\nem\emax$ we get
the desired bound
\begin{align}
\pr(S_{\nem} \ge  (\nem+1)/2 )  \le \pr\{ S_{\nem} - \E\{S_{\nem}\}\geq \varepsilon\} & \leq \exp\left(-\frac{2\varepsilon^2}{\nem}\right)\\
& =   \exp\left(-\frac{2\big(\frac{\nem+1}{2}-\nem\emax\big)^2}{\nem}\right)  .\nonumber
\end{align}
\end{proof}

\subsection{Optimization of the tandem loss bound}\label{sec:opt}
The following results are taken from \cite{masegosa2020second} and are just restated for completeness.

The bound in Theorem~\ref{thm:tandem} is convex in $\lambda$ for a given $\rho$. From 
\cite{tolstikhin2013pac}
and 
\cite{thiemann2017strongly}, the optimal $\lambda$ for a given $\rho$ is simply given by
\begin{equation}\label{eq:lambda}
\lambda = \frac{2}{\sqrt{\frac{2n\E_{\rho^2}[\hat L(h,h',\D)]}{2\KL(\rho\|\pi)+\ln\frac{2\sqrt n}{\delta}} + 1} + 1}\enspace.
\end{equation}

\newcommand{\hatLtnd}{\hat{L}}

Let the matrix of empirical tandem losses be defined as 
$\hatLtnd \in \mathbb{R}^{\nem\times\nem}$ with 
entries $[\hatLtnd]_{ij}=\hat{L}_{\D_{ij}}(h_i,h_j)$, where
$\D_{ij}$ is the data used for estimating the tandem loss for hypotheses $h_i$ and $h_j$. 
Now, the gradient with respect to $\rho$ given $\lambda$

Let $[\nabla f]_i$ for denote the component of the gradient corresponding to hypothesis $h_i$ for fixed $\lambda$. Then we have
\begin{equation}\label{eq:rho}
[\nabla f]_i = 2\sum_{j=1}^M\rho(h_j) \hat L(h_i,h_j),\D_{ij}) + \frac{2}{\lambda n}\left({1 + \ln \frac{\rho(h_i)}{\pi(h_i)}}\right)\enspace.
\end{equation}
If $\D_{ij}=\D_{kl}$ for $i,j,k,l\in\{1,\dots,\nem\}$, then the loss is 
convex in $\rho$ for a given $\lambda$. This is true in our experiments with a separate data set for tuning the ensemble, but not when using bagging.
We refer to \cite{masegosa2020second} for details.

In practice, for a given ensemble, the bound is optimized in an iterative procedure. Starting from a uniform $\rho^{(0)}$, in iteration $t=0,1,\dots$ ones computes $\lambda^{(t)}$ using $\rho^{(t)}$ and equation \eqref{eq:lambda} and then $\rho^{(t+1)}$ by performing a gradient-based optimization step using $\lambda^{(t)}$ and equation \eqref{eq:rho}. 
The optimization problem is low-dimensional ($\rho\in\mathbb{R}^{\nem}$) and converges quickly. We employed iRprop \citep{igel:01e} for gradient-based optimization as done by \cite{wu:21}.

\subsection{Further experimental results}

\begin{table}[ht!]
    \caption{Test accuracies for $\avg$ and $\mv$ and PAC-Bayesian bounds for all experiments. The bounds were computed for the \ab{all} setting, except for IMDB, where checkpoints were taken in a separate experiment due to early-stopping in \ab{\Original}; $\text{MV}_{\text{u}}$ and $\mv$ refer majority voting with uniform and optimized $\rho$, respectively.}
    \label{tab:exp_bounds}
    \centering
    \begin{tabular}{@{}l@{\;\;}l@{\;\;}l@{\;\;}l@{\;\;}l@{\;\;}l@{\;\;}l@{\;\;}l@{}}
    \toprule
        \shortstack[l]{Model\\(Dataset)} & Experiment & \shortstack[l]{Bound\\$\mvunf$} &\shortstack[l]{Bound\\$\mv$} &  \shortstack[l]{$\hat{A}(\avg)$\\last} & \shortstack[l]{$\hat{A}(\mv)$\\last} & \shortstack[l]{$\hat{A}(\avg)$\\all} & \shortstack[l]{$\hat{A}(\mv)$\\all} \\
    \midrule
        \multirow{4}{*}{\shortstack[l]{CNN LSTM\\(IMDB)}} & \Original & 0.585 & 0.59 & 0.872 & 0.873 & - & - \\
        & Checkpointing & 0.576 & 0.594 & 0.873 & 0.872 & 0.873 & \textbf{0.874} \\
        & Bagging & 0.579 & 0.585 & 0.869 & 0.867 & - & - \\
        & SSE & 0.459 & 0.575 & 0.741 & 0.74 & \textbf{0.874} & 0.872 \\
        & Ep.b. (500) & 0.571 & 0.574 & 0.873 & 0.872 & - & - \\
    \midrule
        \multirow{4}{*}{\shortstack[l]{ResNet20\\(CIFAR-10)}} & \Original & 0.425 & 0.743 & \textbf{0.938} & \textbf{0.938} & \textbf{0.938} & \textbf{0.938} \\
        & Bagging & 0.387 & 0.691 & 0.929 & 0.928 & 0.929 & 0.928 \\
        & SSE & 0.593 & 0.612 & 0.894 & 0.893 & 0.908 & 0.909 \\
        & Ep.b. (1500) & 0.718 & 0.718 & 0.937 & 0.934 & - & - \\
    \midrule
        \multirow{5}{*}{\shortstack[l]{ResNet110\\(CIFAR-10)}} & \Original & 0.559 & 0.792 & \textbf{0.956} & \textbf{0.956} & \textbf{0.956} & 0.955 \\
        & Bagging & 0.555 & 0.763 & 0.949 & 0.948 & 0.949 & 0.948 \\
        & SSE & 0.785 & 0.802 & 0.955 & 0.954 & 0.954 & 0.953 \\
        & Ep.b. (1500) & 0.799 & 0.8 & 0.954 & 0.953 & - & - \\
        & Ep.b. (300) & 0.735 & 0.735 & 0.942 & 0.939 & - & - \\
    \midrule 
        \multirow{5}{*}{\shortstack[l]{Wide\\ResNet28-10\\(CIFAR-10)}} & \Original & 0.794 & 0.833 & \textbf{0.967} & 0.966 & \textbf{0.967} & 0.966 \\
        & Bagging & 0.764 & 0.808 & 0.959 & 0.958 & 0.959 & 0.958 \\
        & SSE & 0.843 & 0.848 & 0.963 & 0.964 & 0.964 & 0.964 \\
        & Ep.b. (1500) & 0.832 & 0.834 & 0.966 & 0.965 & - & - \\
        & Ep.b. (300) & 0.805 & 0.806 & 0.961 & 0.957 & - & - \\
    \midrule
        \multirow{5}{*}{\shortstack[l]{ResNet110\\(CIFAR-100)}} & \Original & 0.0 & 0.127 & 0.793 & 0.791 & 0.793 & 0.791 \\
        & Bagging & 0.0 & 0.0 & 0.771 & 0.769 & 0.771 & 0.768 \\
        & SSE & 0.083 & 0.125 & 0.79 & 0.787 & \textbf{0.794} & 0.793 \\
        & Ep.b. (1500) & 0.091 & 0.099 & 0.783 & 0.777 & - & - \\
        & Ep.b. (300) & 0.0 & 0.0 & 0.749 & 0.728 & - & - \\
    \midrule
        \multirow{5}{*}{\shortstack[l]{Wide\\ResNet28-10\\(CIFAR-100)}} & \Original & 0.172 & 0.277 & \textbf{0.83} & 0.829 & 0.829 & 0.829 \\
        & Bagging & 0.041 & 0.173 & 0.812 & 0.811 & 0.812 & 0.811 \\
        & SSE & 0.276 & 0.289 & 0.818 & 0.818 & 0.826 & 0.827 \\
        & Ep.b. (1500) & 0.271 & 0.276 & 0.829 & 0.822 & - & - \\
        & Ep.b. (300) & 0.233 & 0.233 & 0.815 & 0.811 & - & - \\
    \midrule
        \multirow{2}{*}{\shortstack[l]{ResNet50\\(EyePACS)}} & \Original & 0.686 & 0.695 & 0.911 & 0.909 & \textbf{0.912} & \textbf{0.912} \\
        & Bagging & 0.613 & 0.638 & 0.894 & 0.893 & 0.896 & 0.895 \\
    \bottomrule
    \end{tabular}
\end{table}

\begin{figure}[ht!]
    \centering
    \includegraphics[width=0.49\textwidth]{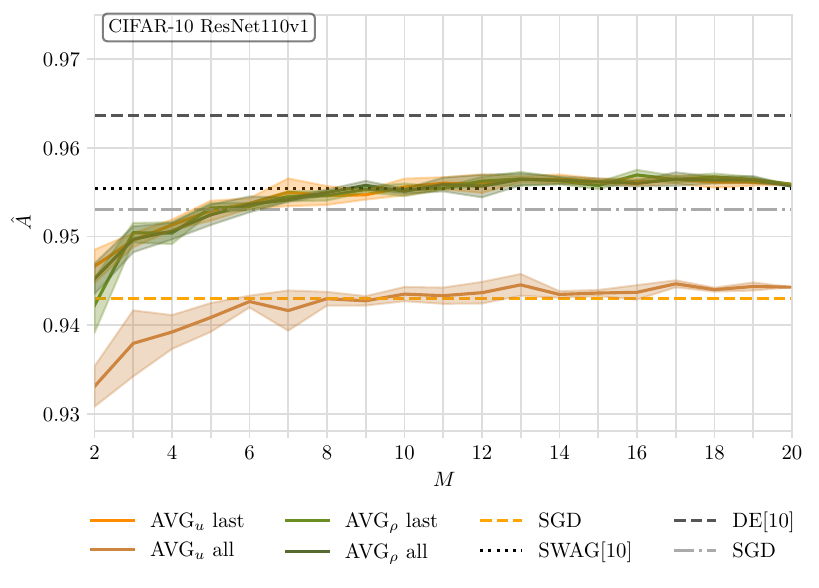}
    \includegraphics[width=0.49\textwidth]{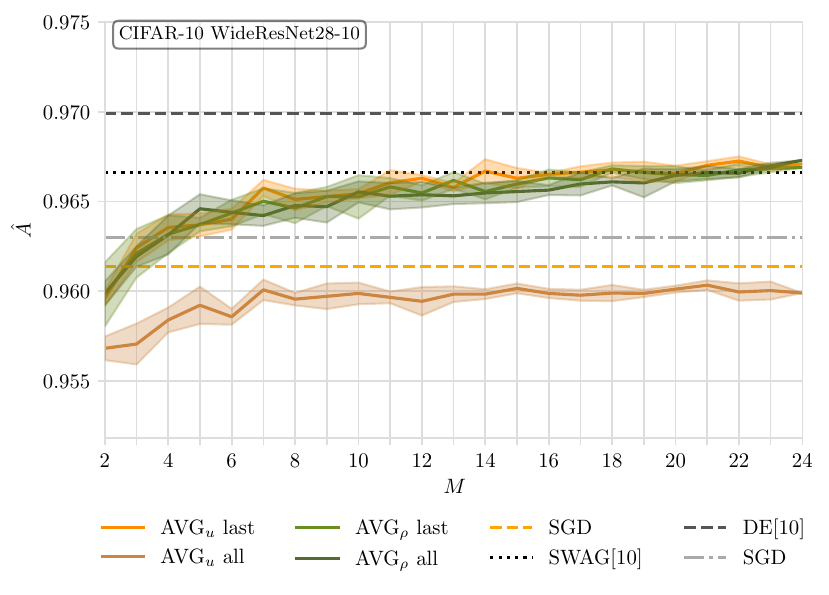}
    \includegraphics[width=0.49\textwidth]{plots/cifar100/resnet110/original/small/ensemble_accs.pdf}
    \includegraphics[width=0.49\textwidth]{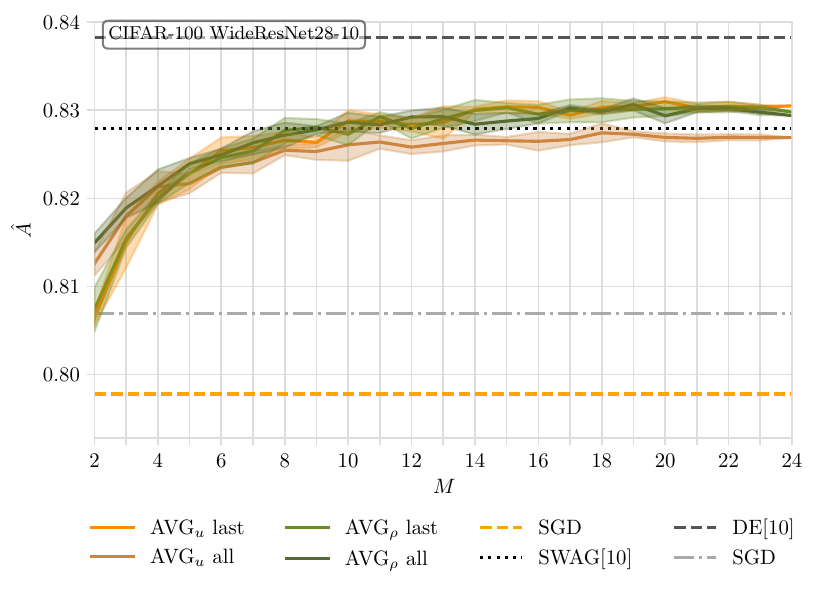}
    \includegraphics[width=0.49\textwidth]{plots/cifar100/wideresnet2810/original/small/ensemble_accs.pdf}
    \includegraphics[width=0.49\textwidth]{plots/retinopathy/resnet50/original/small/ensemble_accs.pdf}
    \caption{Mean test accuracy $\hat{A}$ $\pm \sigma$ over five ensembles for uniformly and PAC-Bayesian weighted deep ensembles ($\avgunf$ and $\avg$), using only the last or all training checkpoints, and best single model (SGD). References are Bayesian ensembles cSGHMC-ap  \citep{wenzel_2020_how}, cSGLD \citep{ashukha2021pitfalls}, \ab{MC-Dr}opout and \ab{MC-Dr}opout \ab{D}eep \ab{E}nsemble (i.e., ensemble of Bayesian ensembles), as well as simple \ab{D}eep \ab{E}nsembles for EyePACS from \citet{band2021benchmarking}. Numbers in brackets indicate the ensemble sizes.}
    \label{fig:original_acc}
\end{figure}

\begin{figure}[ht!]
    \centering
    \includegraphics[width=0.49\textwidth]{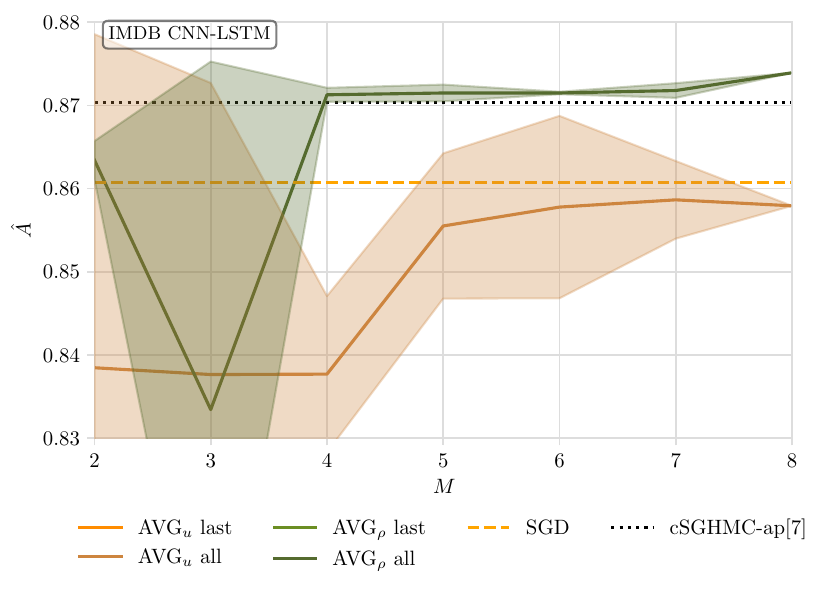}
    \includegraphics[width=0.49\textwidth]{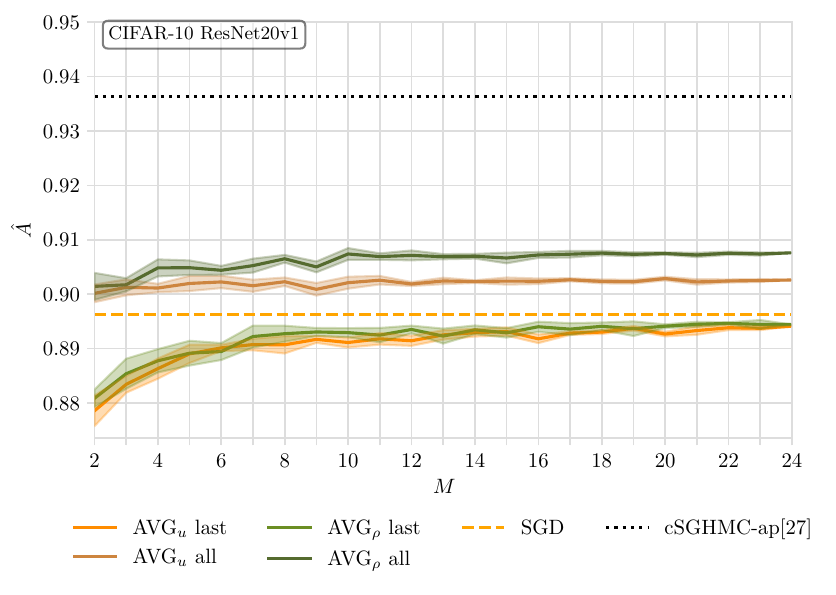}
    \includegraphics[width=0.49\textwidth]{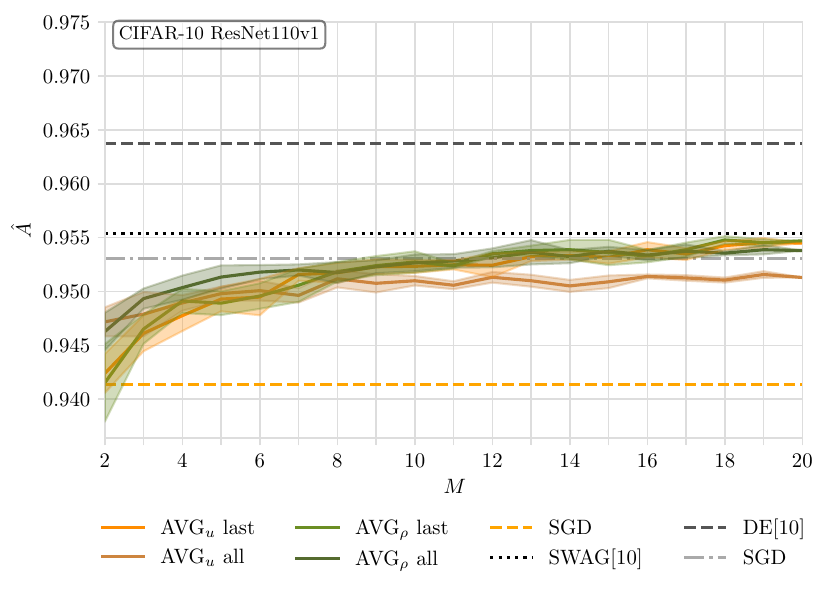}
    \includegraphics[width=0.49\textwidth]{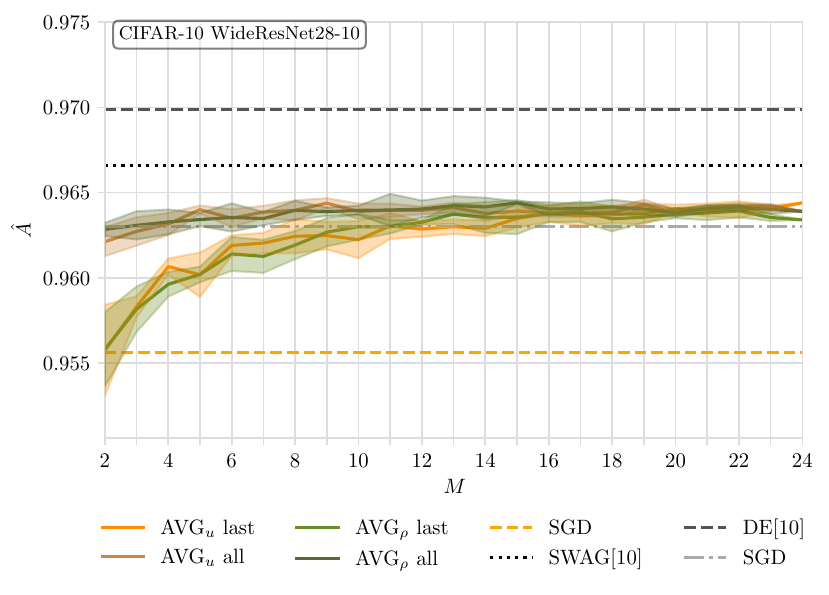}
    \includegraphics[width=0.49\textwidth]{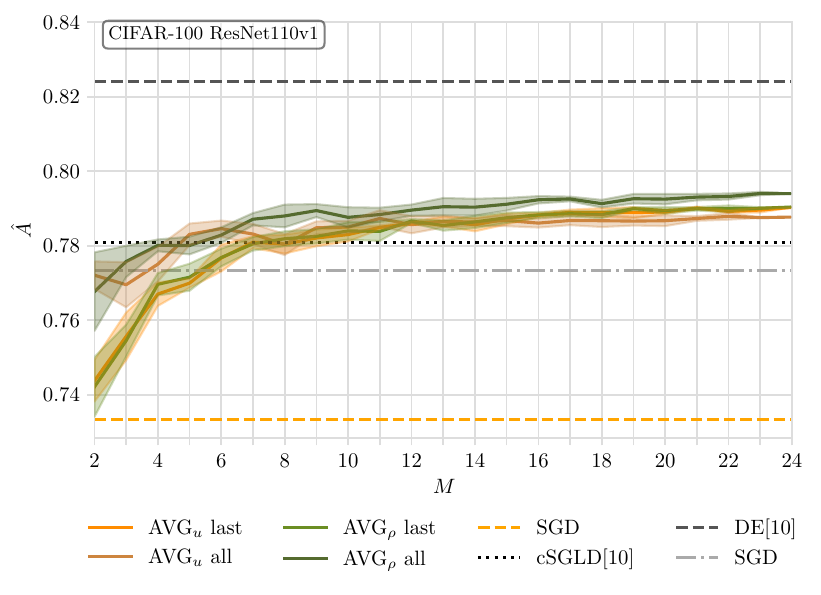}
    \includegraphics[width=0.49\textwidth]{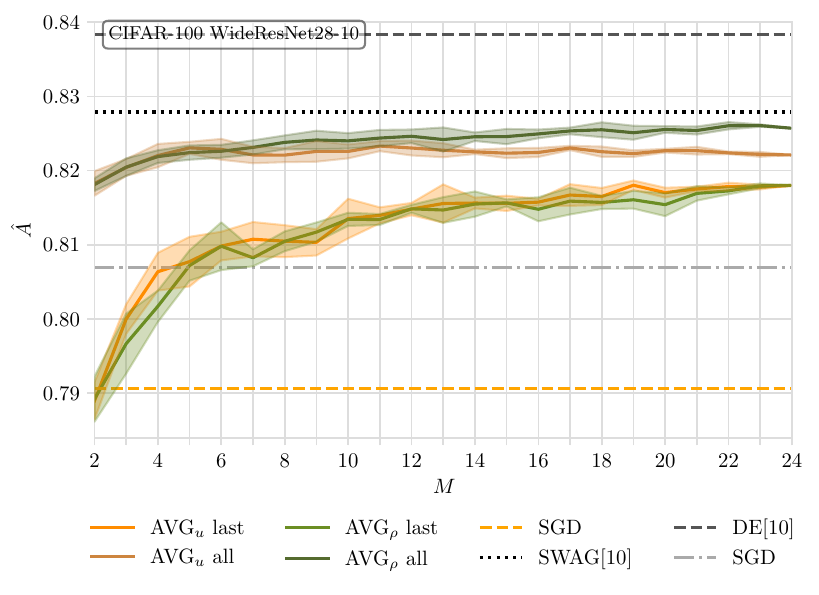}
    \caption{Snapshot ensemble (SSE) experiments: Mean ensemble accuracy (5 ensembles, $\pm\sigma$)}
    \label{fig:sse_acc}
\end{figure}

\subsubsection{Bagging}\label{app:bagging}
In the bagging experiments, we used different training data  for each ensemble member to increase diversity. 
We drew bootstrap samples from the training data  uniformly at random with replacement. The sample size was equal to the size of the training data.
The  experiments illustrate the effect of decreasing the training data volume (e.g., to use the data for bound optimization).

The individual network performance suffered from omitting unique training data points, and although the networks are assumed to be more diverse, the 
resulting ensembles performed  worse performance across all experiments.  The results are visualized in Figure \ref{fig:bagging_acc}, which shows that none of the ensembles, neither uniform nor optimized in their weighting, could match the Bayesian reference. This is in line with the literature \citep{lakshminarayanan2017simple, lee2015m}, which finds the same decrease in performance for neural network ensembles. 
PAC-Bayesian optimization is therefore best suited for large datasets with additional data that is not essential for training.
On a positive note,  random initialization and stochastic training appears to be enough randomization for creating diverse ensembles.
\begin{figure}[ht!]
    \centering
    \includegraphics[width=0.49\textwidth]{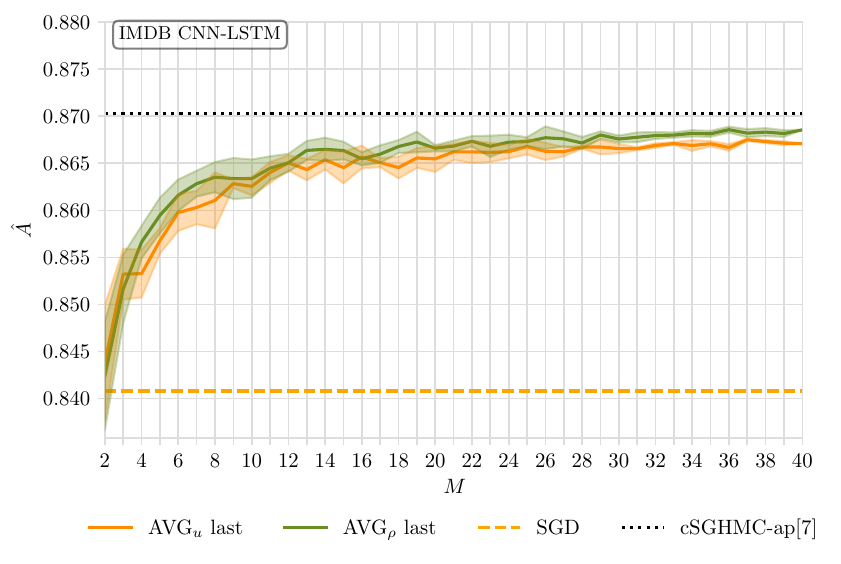}
    \includegraphics[width=0.49\textwidth]{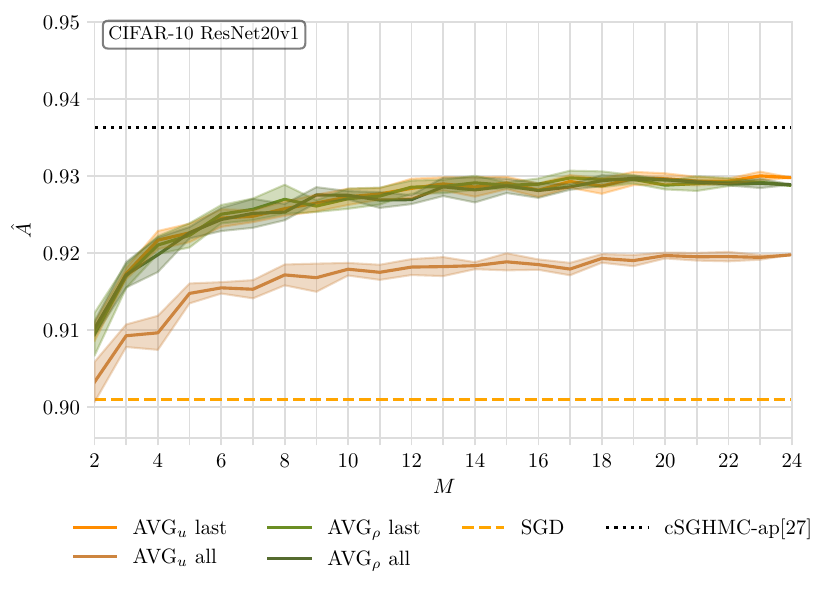}
    \includegraphics[width=0.49\textwidth]{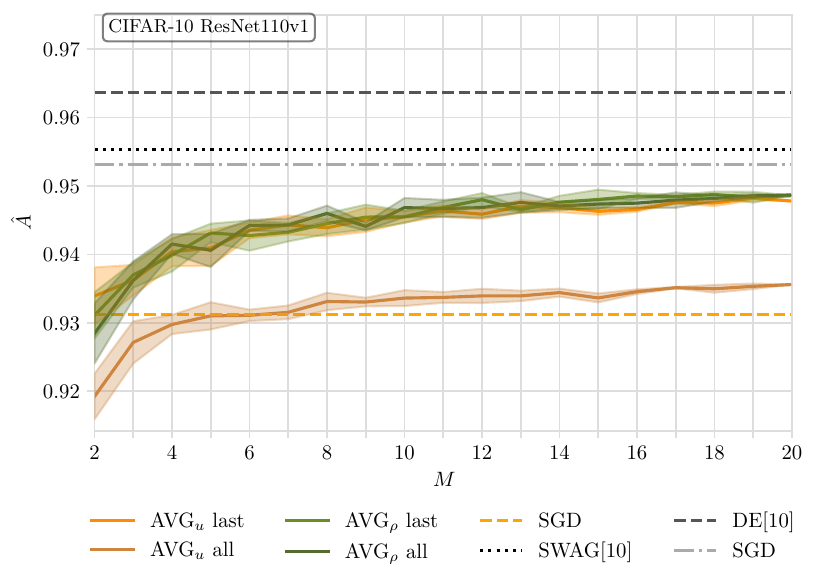}
    \includegraphics[width=0.49\textwidth]{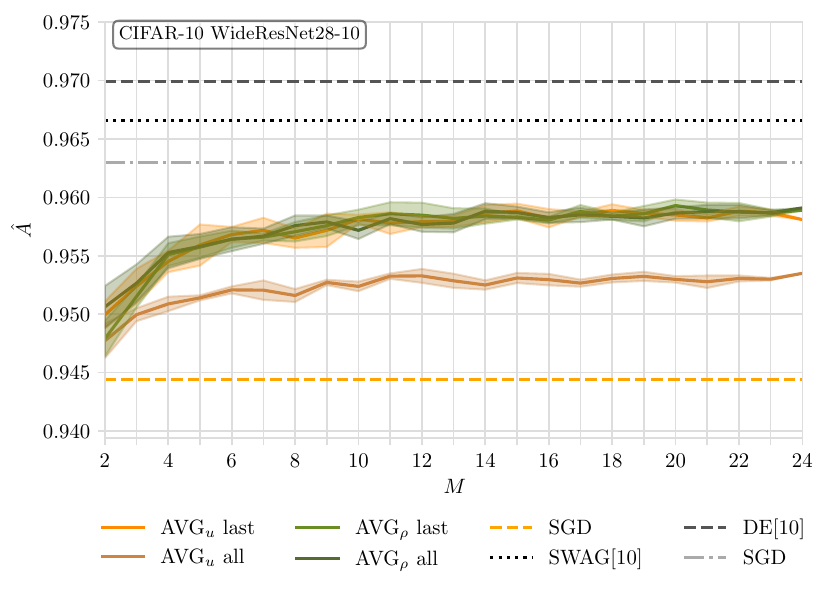}
    \includegraphics[width=0.49\textwidth]{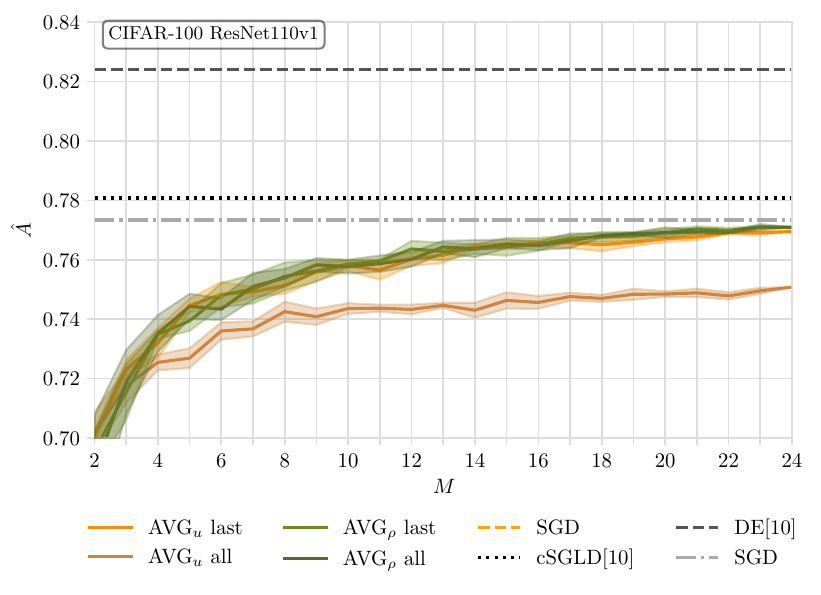}
    \includegraphics[width=0.49\textwidth]{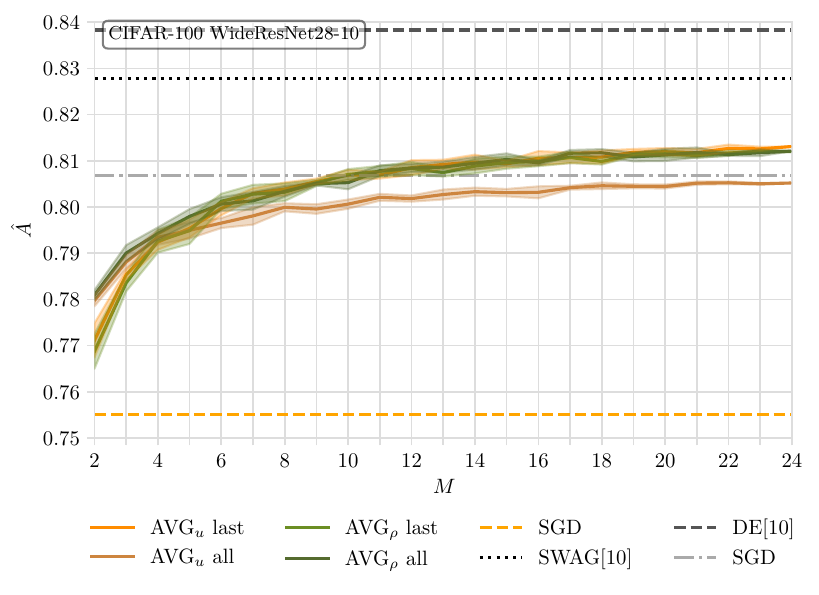}
    \includegraphics[width=0.49\textwidth]{plots/cifar100/wideresnet2810/bootstr/small/ensemble_accs.pdf}
    \includegraphics[width=0.49\textwidth]{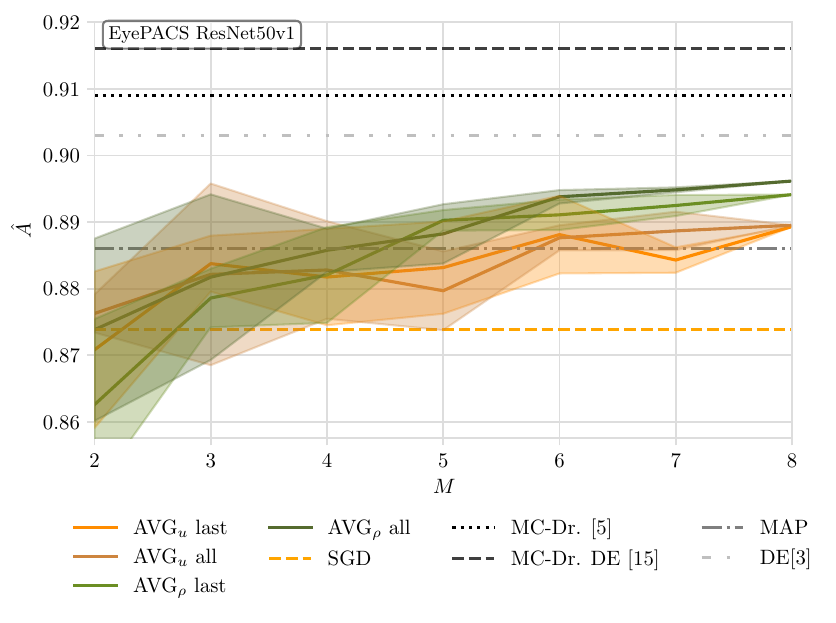}
    \caption{Bagging: Mean ensemble accuracy (5 runs, $\pm\sigma$ highlighted)}
    \label{fig:bagging_acc}
\end{figure}

\subsubsection{Training time comparison}\label{app:training_time}
It is not straight-forward to compare the computational resources required by the different approaches due to their different training and inference procedures. 
The BMA based approaches require training of the neural network and sampling the posterior, where the latter can be  time consuming.
In contrast, minimization of the PAC-Bayesian bound is highly efficient and its computational costs can be neglected.
Simple deep ensembles are embarrassingly parallel,
while methods that consider ensemble members from one process cannot be fully parallelized.

But even for simple ensembles, there can be a trade-off between number of networks and the number of training iterations spent on each network. 
To explore this trade-off, we considered the worst case scenario for deep ensemble training  and assume that there is no parallelization. Furthermore, we assumed that the sampling from the posterior does not take any time. That is, for methods based on BMA only the training epochs for optimizing the network are counted, not the sampling. 
In this sequential setting, we asked, given an overall budget of $B$ training epochs, how does the performance of our deep ensembles depend on the number $\nem$ of networks when each network is trained for $\lfloor B / \nem\rfloor$ epochs.
For the experiments taken from \citet{wenzel_2020_how}, we set 
$B=1500$ for CIFAR-10 and $B=500$ for IMDB, corresponding to the number of training epochs used for the Bayes ensemble. 
The results are shown in Figure~\ref{fig:epoch_budget_acc}. Even in this biased comparison, 
there were choices of $\nem$ for which the simple deep ensemble outperformed the Bayesian approach.


\begin{figure}[ht!]
    \centering
    \includegraphics[width=0.49\textwidth]{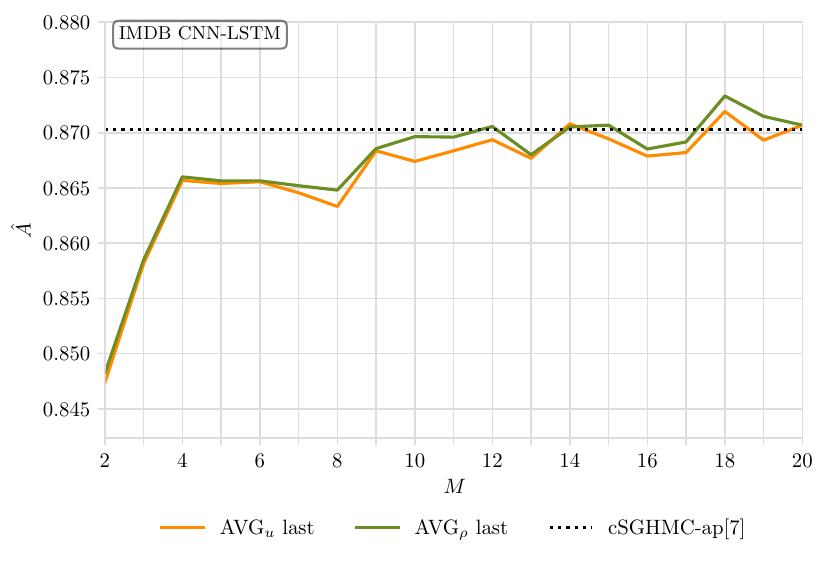}
    \includegraphics[width=0.49\textwidth]{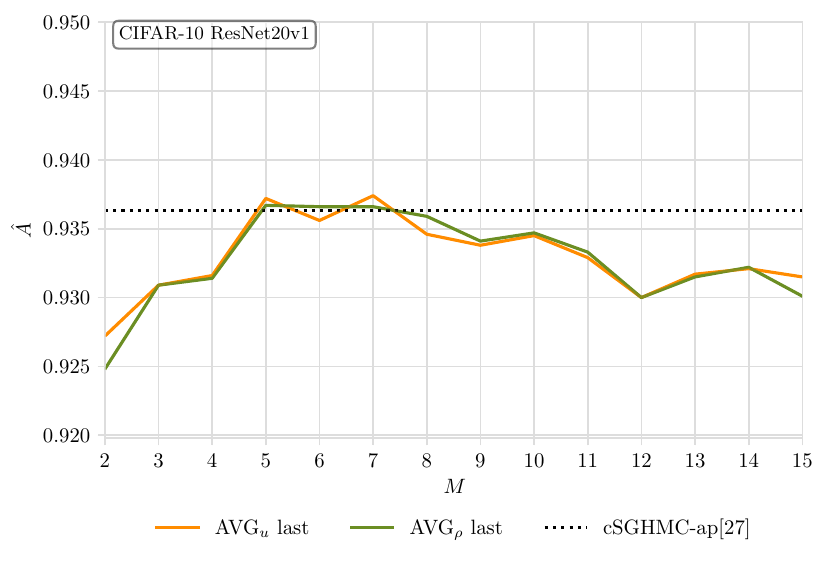}
    \caption{Epoch budget in a sequential training setting: Mean ensemble accuracy (5 ensembles, $\pm\sigma$). The number of epochs is given by $\lfloor B / \nem\rfloor$, where $b=1500$.}
    \label{fig:epoch_budget_acc}
\end{figure}


\end{document}